\documentclass{article}

\PassOptionsToPackage{numbers,compress}{natbib}
\usepackage{iclr2026_conference,times}

\usepackage[T1]{fontenc}
\usepackage{microtype}
\usepackage{url}

\usepackage{amsmath,amssymb,mathtools,amsthm}
\usepackage{algorithm}
\usepackage{algpseudocode}

\usepackage{graphicx}
\usepackage{subcaption}
\usepackage{booktabs}
\usepackage{multirow}
\usepackage{array}
\usepackage{tabularx}
\usepackage{longtable}
\usepackage{adjustbox}
\usepackage{wrapfig}

\usepackage{enumitem}
\usepackage[table]{xcolor}
\usepackage[most]{tcolorbox}

\usepackage{titletoc}

\usepackage{hyperref}
\usepackage[capitalize,noabbrev]{cleveref}

\usepackage{pifont}
\definecolor{ICMLBlue}{rgb}{0.0, 0.0, 0.5}
\definecolor{CiteTeal}{HTML}{008080}
\definecolor{lightcrimson}{rgb}{0.93, 0.16, 0.51}
\definecolor{LinkRed}{rgb}{0.768, 0.054, 0.054}

\hypersetup{
    colorlinks=true,
    linkcolor=ICMLBlue,      % sections, appendix TOC, \ref, \autoref
    citecolor=ICMLBlue,      % \citep, \citet, \cite
    urlcolor=ICMLBlue,   % \url, \href
    linktoc=page
}

\definecolor{lightgray}{gray}{0.9}
\definecolor{lightblue}{RGB}{230,245,255}

\theoremstyle{plain}
\newtheorem{theorem}{Theorem}[section]

\newtheorem{problem}{Problem}

\theoremstyle{definition}
\newtheorem{definition}[theorem]{Definition}

\theoremstyle{remark}

\titlecontents{section}[1.5em]
    {\addvspace{6pt}\bfseries}
    {\contentslabel{1.5em}}
    {\hspace*{-1.5em}}
    {\titlerule*[0.5pc]{.}\contentspage}

\titlecontents{subsection}[3.8em]
    {}
    {\contentslabel{2.3em}}
    {\hspace*{-2.3em}}
    {\titlerule*[0.5pc]{.}\contentspage}

\newcommand{\appendixtoc}{
    \begingroup
    \renewcommand{\contentsname}{\large\textbf{Appendix}}
    \startcontents[appendices]
    \printcontents[appendices]{}{1}{\setcounter{tocdepth}{2}}
    \endgroup
}

\newcommand{\std}[1]{\textcolor{gray}{\fontsize{5}{6}\selectfont$\pm$#1}}

\makeatletter
\@ifpackageloaded{algorithm}{
    
}{}
\makeatother

\tcbuselibrary{breakable,skins,listings}

\definecolor{BoxGrayBack}{HTML}{F8F8F8}
\definecolor{BoxGrayFrame}{HTML}{B8B8B8}
\definecolor{BoxBlueBack}{HTML}{F5F9FF}
\definecolor{BoxBlueFrame}{HTML}{2D5DA8}
\definecolor{BoxGreenBack}{HTML}{F6FFF8}
\definecolor{BoxGreenFrame}{HTML}{2E7D32}

\newtcolorbox{detailbox}[2][]{
  enhanced,
  breakable,
  colback=BoxGrayBack,
  colframe=BoxGrayFrame,
  boxrule=0.5pt,
  arc=2pt,
  left=6pt,
  right=6pt,
  top=5pt,
  bottom=5pt,
  fonttitle=\bfseries,
  coltitle=black,
  title={#2},
  #1
}

\newtcblisting{promptbox}[2][]{
  enhanced,
  breakable,
  listing only,
  colback=BoxBlueBack,
  colframe=BoxBlueFrame,
  boxrule=0.6pt,
  arc=2pt,
  left=6pt,
  right=6pt,
  top=5pt,
  bottom=5pt,
  fonttitle=\bfseries,
  coltitle=white,
  colbacktitle=BoxBlueFrame,
  title={#2},
  listing options={
    basicstyle=\ttfamily\footnotesize,
    breaklines=true,
    breakatwhitespace=false,
    columns=fullflexible,
    keepspaces=true,
    showstringspaces=false,
    upquote=true
  },
  #1
}

\newtcolorbox{implementationbox}[2][]{
  enhanced,
  breakable,
  colback=BoxGreenBack,
  colframe=BoxGreenFrame,
  boxrule=0.6pt,
  arc=2pt,
  left=6pt,
  right=6pt,
  top=5pt,
  bottom=5pt,
  fonttitle=\bfseries,
  coltitle=white,
  colbacktitle=BoxGreenFrame,
  title={#2},
  #1
}

\newtcolorbox{takeawaybox}{
  colback=red!4,
  colframe=red!60!black,
  boxrule=0.65pt,
  arc=1.5pt,
  left=6pt,
  right=6pt,
  top=5pt,
  bottom=5pt
}

\hypersetup{
  pdftitle={Adapting Context Compression for Long-Horizon Agents with Counterfactual Continuations},
  pdfauthor={Guanghui Min, Liang Wu, Minjia Shi, Yinhan He, Mayank Darbari, Liangjie Hong, Chen Chen}
}

\title{Adapting Context Compression for Long-Horizon Agents with Counterfactual Continuations}
\iclrfinalcopy

\author{
\textbf{Guanghui Min}$^{1}$ \quad
\textbf{Liang Wu}$^{2}$ \quad
\textbf{Minjia Shi}$^{1}$ \quad
\textbf{Yinhan He}$^{1}$ \\
\textbf{Mayank Darbari}$^{2}$ \quad
\textbf{Liangjie Hong}$^{2}$ \quad
\textbf{Chen Chen}$^{1}$\thanks{Corresponding author.} \\[0.6em]
$^{1}$University of Virginia
\qquad
$^{2}$Nokia \\[0.3em]
\texttt{\{jjm8vr,nzh3ru,nee7ne,zrh6du\}@virginia.edu} \\
\texttt{\{liang.wu,mayank.darbari,liangjie.hong\}@nokia.com}
}

\begin{document}

\maketitle

\begin{abstract}
  
Long-horizon agents require context compression to manage growing interaction histories.
Compression quality, however, is ultimately determined by downstream execution.
Existing prompt-adaptation methods infer compression errors by comparing full-context and compressed trajectories.
Such comparisons cannot isolate individual compressions and are confounded by agent stochasticity.
We first find that compression degrades reliability before solvability.
Using matched counterfactual continuations that compare execution from the same agent state with versus without compression, we further
show that severe degradation concentrates at isolated compression
events.
Motivated by this finding, we propose \texttt{PAIR} (\textbf{\underline{P}}rompt \textbf{\underline{A}}daptation using \textbf{\underline{I}}nterventional \textbf{\underline{R}}ollouts) for adapting structured compression prompts.
\texttt{PAIR} identifies individual compressions that degrade subsequent execution, diagnoses their effects, and revises the relevant sections of a fixed compression template.
\texttt{PAIR} achieves the strongest cross-run reliability
among compressed methods in every main benchmark--scope combination,
consistently exceeding the competing prompt-adaptation baseline.
Without modifying the downstream agent, \texttt{PAIR} brings compressed execution close to the no-compression baseline and sometimes numerically
exceeds it.

\end{abstract}

\section{Introduction}
Advances in language models and agent frameworks have enabled LLM agents to tackle increasingly complex, long-horizon tasks~\citep{yao2023react,shinn2023reflexion,wang2024codeact}.
These agents navigate the web~\citep{zhou2024webarena}, control software applications~\citep{trivedi2024appworld}, and carry out office workflows~\citep{wang2024officebench}. Completing such 
tasks requires many rounds of tool use and interaction with the environment. Actions, observations, and intermediate results accumulate throughout execution, causing the agent's context to grow continually.
Since context windows are finite and inference cost grows with input length, retaining the full interaction history eventually becomes impractical. Context compression is therefore central to the design of long-horizon agents.

Prior context-compression methods primarily target static contexts, such as long prompts
and retrieved documents, whereas long-horizon agents require repeated compression as their
interaction histories grow during execution. 
General prompt-compression methods use token selection or rewriting to preserve semantic content or downstream utility under a reduced context budget~\citep{li2023selective,jiang2023llmlingua,jiang2024longllmlingua,pan2024llmlingua,shandilya2025taco}, but do not directly account for how compressed context affects subsequent agent execution. 
Recent work has therefore developed compression methods specifically for agent trajectories. ReSum periodically summarizes interaction histories and optionally adapts the agent policy through ReSum-GRPO, while SUPO jointly optimizes summarization and tool use through reinforcement learning \citep{wu2025resum,lu2025supo}.
These approaches can adapt the agent to operate with compressed histories, but require modifying the downstream agent. In many deployed settings, however, the agent is fixed, making the compression mechanism itself the primary target for improvement. Systems such as OpenClaw and Hermes Agent address this setting using structured prompts that specify what information should be preserved during compression
\citep{openclaw_compaction,hermes_context_compression}. 
Building on this prompt-based design, ACON adapts the compression prompt using feedback from successful full-context and failed compressed trajectories \citep{kang2025acon}. This provides a practical way to improve compression without modifying the agent, but relies on trajectory-level outcomes to determine how the compression prompt should be revised.

However, trajectory-level feedback is too coarse for recurrent context compression. A long-horizon trajectory may contain several compression events, so its final outcome cannot identify which one altered later execution. Agent stochasticity further confounds attribution, as independent rollouts from the same context may diverge. Thus, comparing full-context and compressed trajectories does not reliably isolate any compression's effect. We call each replacement of accumulated history by a compressed representation a \emph{compression boundary} (e.g., summarizing the first five interactions before the sixth action). We therefore ask three progressively focused questions: \textit{(i) How does recurrent compression affect execution reliability and efficiency? (ii) Is degradation localized to individual boundaries or accumulated across repeated compressions? (iii) Can boundary-level evidence improve the compression prompt for target tasks?}

We introduce \texttt{PAIR} (\textbf{\underline{P}}rompt \textbf{\underline{A}}daptation using \textbf{\underline{I}}nterventional \textbf{\underline{R}}ollouts), a framework for adapting context-compression prompts for frozen long-horizon agents. Instead of attributing trajectory-level performance differences directly to compression, \texttt{PAIR} uses counterfactual continuations to estimate the effect of individual compression events. By comparing repeated rollouts from the same environment state before and after compression, it identifies compression boundaries that reliably alter task success or execution efficiency despite agent stochasticity. The resulting boundary-level evidence is then used to diagnose recurring compression errors and revise the compression prompt. Throughout this process, the agent, compressor model, tools, and decoding configuration remain fixed. 
Our contributions are threefold:
\begin{itemize}[leftmargin=0.25cm, nosep]
    \item \textbf{Recurrent Compression Analysis:} We characterize how
    recurrent compression affects long-horizon execution, showing that
    it degrades cross-run reliability before solvability and introduces
    additional recovery steps. 
    Across environments, many compression events modestly increase the number of subsequent interaction steps, while substantial drops in task success or large increases in execution length are concentrated at a small number of compression boundaries.
    \item \textbf{Boundary-Level Attribution \& Adaptation:} 
    We use paired counterfactual continuations from the same environment state, with versus without each compression, to estimate how a compression changes subsequent task success and execution length. We establish a performance-difference identity connecting these boundary-level effects to the downstream compressor objective. Using this evidence, \texttt{PAIR} identifies harmful compressions, diagnoses the information they fail to preserve, and adapts the corresponding sections of a fixed, deployment-compatible compression template.
    \item \textbf{Empirical Effectiveness:} Experiments on
AppWorld~\citep{trivedi2024appworld},
OfficeBench~\citep{wang2024officebench}, and
$\tau^2$-Bench Retail~\citep{barres2025tau} show that
\texttt{PAIR} achieves the highest rate of consistent task completion
across repeated runs under both history-only and prefix-conditioned
compression. It consistently exceeds the strongest competing
prompt-adaptation baseline while approaching or occasionally
numerically exceeding uncompressed execution.
\end{itemize}

\section{Preliminary}\label{sec:prelim}

\noindent\textbf{Notations.}
We study context compression for a fixed downstream agent $\mathcal{M}$ interacting with an environment $\mathcal{E}$. The agent's language model, system prompt, action interface, output parser, and decoding procedure remain unchanged. These fixed components are absorbed into $\mathcal{M}$, while $z_t$ denotes the mutable textual context visible to the agent.
Let $u\sim\mathcal{D}$ be a task instruction. A rollout is $\tau=(u,a_1,o_1,\ldots,a_T,o_T)$, and the complete history before step $t$ is $h_t=(u,a_1,o_1,\ldots,a_{t-1},o_{t-1})$. Given $z_t$, the agent samples $a_t\sim\mathcal{M}(\cdot\mid z_t)$ and receives an observation $o_t\sim\mathcal{E}(\cdot\mid h_t,a_t)$. Full-context execution uses $z_t=h_t$. We use $|x|$ to denote the token length of a textual context $x$. Since $|h_t|$ grows with the interaction horizon, it may eventually exceed the available context budget.
All notations and symbols are summarized in Table~\ref{tab:notation}.

\subsection{Problem Definition}
\label{subsec:problem_def}

To keep the agent-visible context within budget $B$, a compressor $\mathcal{C}$ replaces an overlength context with a bounded textual representation~\citep{kang2025acon,lu2025supo,AG2_2024,hermes_context_compression,openclaw_compaction}. After action $a_t$ and observation $o_t$, define the pre-compression context as $\bar{z}_{t+1}=z_t\oplus(a_t,o_t)$, where $\oplus$ denotes textual concatenation. The next context is recursively updated as
\begin{equation}
z_{t+1}=
\begin{cases}
\bar{z}_{t+1},
& |\bar{z}_{t+1}|\leq B,\\
c_{t+1},\quad
c_{t+1}\sim\mathcal{C}(\cdot\mid\bar{z}_{t+1},B),
& |\bar{z}_{t+1}|>B.
\end{cases}
\label{eq:context_transition}
\end{equation}
Here, $\mathcal{C}(\cdot\mid x,B)$ has support only on contexts with $|c|\leq B$. After compression, the replacement context is the only historical record for the frozen agent and subsequent compression steps.

Let $p_{\mathcal{C},B}(\tau\mid u)$ denote the rollout distribution induced by the frozen agent, environment, compressor, and recursive context transition, and let $R(\tau)$ denote the environment-defined terminal reward.

\begin{problem}[Compressor Adaptation for Frozen Agents]
\label{def:problem}
Given a task distribution $\mathcal{D}$, a frozen downstream agent $\mathcal{M}$, an environment $\mathcal{E}$, a context budget $B$, and an admissible compressor class $\mathfrak{C}$, adapt the compressor to the frozen agent by solving
\begin{equation}
\mathcal{C}^{*}\in
\operatorname*{arg\,max}_{\mathcal{C}\in\mathfrak{C}}
\mathbb{E}_{\substack{
u\sim\mathcal{D},\,
\tau\sim p_{\mathcal{C},B}(\cdot\mid u)
}}
\left[R(\tau)\right].
\label{eq:context_compression_problem}
\end{equation}
\end{problem}

Problem~\ref{def:problem} evaluates compression by its effect on downstream task performance rather than by its fidelity in reconstructing the original history. We implement $\mathcal{C}$ using a frozen autoregressive language model conditioned on a natural-language compression prompt $P$, which induces the policy $\pi_P(c\mid x,B)$. Because compressed contexts may preserve different information and induce different downstream returns, we adapt $P$ according to its effect on subsequent agent behavior.

% \newpage
\section{Diagnosing Compression-Induced Instability}\label{sec:motivation}
In this section, we study how recurrent context compression affects
long-horizon agent execution. We first show that compression primarily
reduces cross-run reliability and increases interaction steps, even
when tasks remain solvable (Section~\ref{sec:repeated-compression}).
We then introduce a counterfactual protocol that attributes these
effects to individual compression boundaries
(Section~\ref{sec:counterfactual-evaluation}). The resulting analysis
shows that compression commonly introduces mild execution burden,
whereas severe degradation is concentrated at a small number of
boundaries (Section~\ref{sec:hazard-distribution}).

\subsection{Compression Degrades Reliability Before Solvability}
\label{sec:repeated-compression}

\begin{figure*}[t]
    \centering
    \captionsetup[subfigure]{font=small,skip=1pt}

    \makebox[\linewidth][c]{%
        \begin{subfigure}[t]{0.27\linewidth}
            \centering
            \includegraphics[width=\linewidth]
            {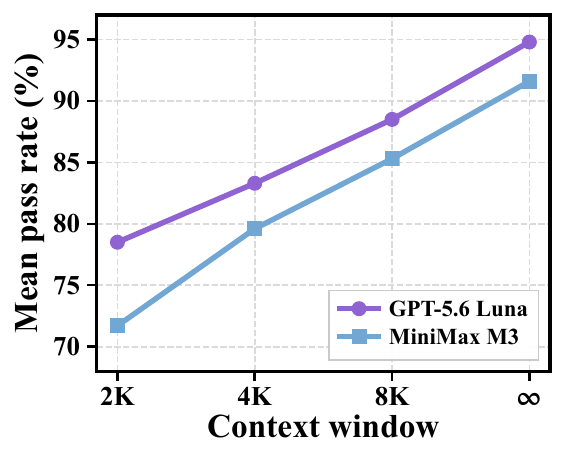}
            \caption{Compression performance.}
            \label{fig:compression-performance}
        \end{subfigure}
        \hspace{0.015\linewidth}
        \begin{subfigure}[t]{0.27\linewidth}
            \centering
            \includegraphics[width=\linewidth]
            {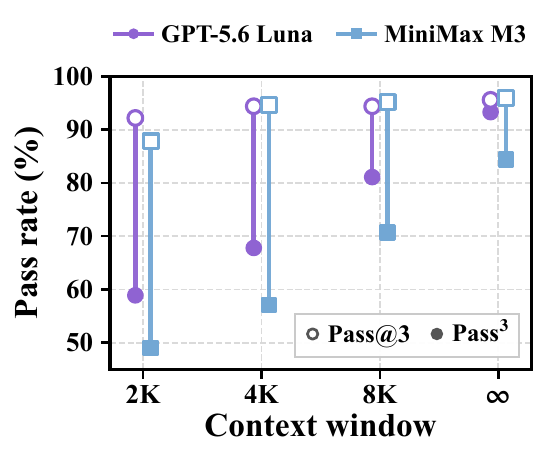}
            \caption{Cross-run reliability.}
            \label{fig:cross-run-reliability}
        \end{subfigure}
        \hspace{0.015\linewidth}
        \begin{subfigure}[t]{0.27\linewidth}
            \centering
            \includegraphics[width=\linewidth]
            {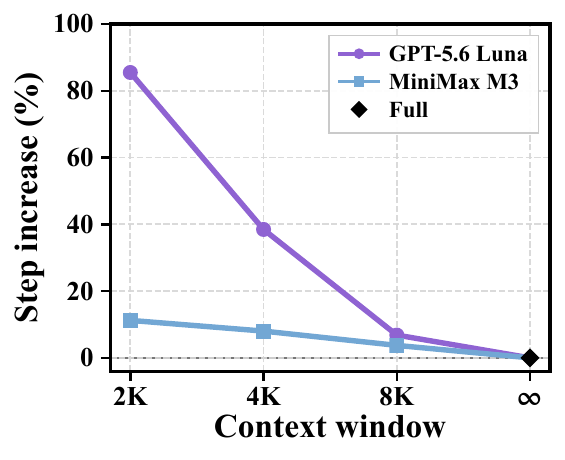}
            \caption{Execution overhead.}
            \label{fig:execution-overhead}
        \end{subfigure}%
    }

    \vspace{-1mm}
    \caption{
        \textbf{Recurrent context compression degrades reliability
        before solvability.}
        (a) Mean pass rate declines as the compression window shrinks.
        (b) Pass$^3$ falls faster than Pass@3, indicating greater
        cross-run variability; lines connect the two metrics.
        (c) Smaller windows increase interaction steps relative to
        full-context execution.
    }
    \label{fig:compression-effects}
    \vspace{-4mm}
\end{figure*}

Equation~\eqref{eq:context_transition} makes compression a recurrent
intervention: each compressed context guides subsequent actions and
becomes part of the input to later compressions. We examine its aggregate
effect on 90 AppWorld training tasks~\citep{trivedi2024appworld}. For
each context window, we run GPT-5.6 Luna~\citep{openai2026gpt56luna}
and MiniMax-M3~\citep{minimax2026m3} three times under full-context
execution and OpenClaw-style recurrent compression~\citep{openclaw_compaction}.
Additional experimental details are provided in
Appendix~\ref{app:experimental_details}.

Figure~\ref{fig:compression-effects} shows that smaller compression
windows reduce mean pass rate, but affect reliability considerably more
than solvability. Pass$^3$, the fraction of tasks solved in all three
runs~\citep{barres2025tau}, declines much faster than Pass@3, the
fraction solved at least once. Compressed runs also require more
interaction steps. Compression therefore first turns consistently
solved tasks into intermittently solved and less efficient ones, rather
than making them uniformly unsolvable.

AppWorld maintains persistent application state, so information removed
from the active context can often be recovered through further
interaction. This reflects a practical deployment setting, as agent
systems such as OpenClaw may archive pre-compression histories in memory
or local files for later retrieval. The additional steps in
Figure~\ref{fig:compression-effects} suggest that agents often attempt
to reconstruct missing execution state after compression. Some
rollouts recover and complete the task, while others fail.

These observations identify two harmful effects of compression: reduced
continuation success and increased interaction burden. Their values do
not reveal whether degradation accumulates across many compressions or
is concentrated at boundaries. We distinguish these possibilities next.

\subsection{Counterfactual Evaluation at Compression Boundaries}
\label{sec:counterfactual-evaluation}

A terminal outcome cannot identify the contribution of an individual
compression. Multiple context replacements may occur before termination,
while independent rollouts may diverge even from the same context. We
therefore evaluate each realized compression at the boundary where it is
introduced.

Consider a replacement
$\bar z_{t+1}\rightarrow c_{t+1}$ in
Equation~\eqref{eq:context_transition}, where
$c_{t+1}\sim\mathcal C(\cdot\mid\bar z_{t+1},B)$, and let
$\xi_t=(h_{t+1},\bar z_{t+1})$ denote the corresponding boundary state.
We restore the environment state reached after $h_{t+1}$ and continue
execution from either the pre-compression context $\bar z_{t+1}$ or the
post-compression context $c_{t+1}$. Both conditions use the same frozen
agent and tools, with compression disabled thereafter, and are rolled out
to termination. This isolates the current context replacement from later
compression events.

Let $\varnothing$ denote execution without further compression. For any
context $x$, let $Q_{\varnothing}^{R}(\xi_t,x)$ denote the expected
terminal reward obtained by continuing from $z_{t+1}=x$, and let
$Q_{\varnothing}^{L}(\xi_t,x)$ denote the expected number of remaining
interaction steps.

\begin{definition}[Boundary-level compression effect]
\label{def:boundary-compression-effect}
The effect of summary $c_{t+1}$ at boundary state $\xi_t$ is
\begin{equation}
\mathbf D_{\varnothing}(\xi_t,c_{t+1})
=
\begin{pmatrix}
H_{\varnothing}(\xi_t,c_{t+1})\\
B_{\varnothing}(\xi_t,c_{t+1})
\end{pmatrix}
=
\begin{pmatrix}
Q_{\varnothing}^{R}(\xi_t,\bar z_{t+1})
-
Q_{\varnothing}^{R}(\xi_t,c_{t+1})
\\[2pt]
Q_{\varnothing}^{L}(\xi_t,c_{t+1})
-
Q_{\varnothing}^{L}(\xi_t,\bar z_{t+1})
\end{pmatrix}.
\label{eq:boundary-compression-effect}
\end{equation}
We call $H_{\varnothing}$ the \emph{outcome hazard} and
$B_{\varnothing}$ the \emph{interaction burden}. Positive values indicate
that compression reduces expected terminal reward or increases expected
continuation length, respectively.
\end{definition}

The two components capture complementary consequences of the same
intervention. Outcome hazard measures the resulting change in task
completion under the fixed execution horizon. Interaction burden measures
the additional execution required to recover from the compressed context.
Because agent rollouts are subject to a step limit, this burden also
reduces the remaining execution budget and may eventually turn an
otherwise recoverable disruption into a timeout failure.

Given $m$ independent continuations from each context, we estimate
$Q_{\varnothing}^{R}(\xi_t,x)$ by
$m^{-1}\sum_{i=1}^{m}R(\tau_{t,i}^{x})$ and
$Q_{\varnothing}^{L}(\xi_t,x)$ by
$m^{-1}\sum_{i=1}^{m}\ell(\tau_{t,i}^{x})$, where
$x\in\{\bar z_{t+1},c_{t+1}\}$.
Substituting these sample means into
Equation~\eqref{eq:boundary-compression-effect} yields
$\widehat{\mathbf D}_{\varnothing}
=(\widehat H_{\varnothing},\widehat B_{\varnothing})$.
Restoring the same environment state controls for prior execution, while
disabling subsequent compression isolates the effect of the current
replacement; repeated continuations reduce stochastic rollout noise.
The outcome hazard is directly aligned with the compressor objective in
Problem~\ref{def:problem}. Denote this objective by
$
\mathcal J(\mathcal C)
=
\mathbb E_{u\sim\mathcal D,\,
\tau\sim p_{\mathcal C,B}(\cdot\mid u)}
[R(\tau)].
$

\begin{theorem}[Boundary performance-difference identity]
\label{thm:boundary-performance-difference}
Let $\varnothing$ denote execution without compression. For any compressor
$\mathcal C$,
\begin{equation}
\mathcal J(\mathcal C)-\mathcal J(\varnothing)
=
-\mathbb E_{\substack{
u\sim\mathcal D,\;
\tau\sim p_{\mathcal C,B}(\cdot\mid u)
}}
\!\left[
\sum_{t\in\mathcal B(\tau)}
H_{\varnothing}(\xi_t,c_{t+1})
\right].
\label{eq:boundary-performance-difference}
\end{equation}
Thus, minimizing expected cumulative boundary hazard is equivalent to
maximizing the objective in Problem~\ref{def:problem}.
\end{theorem}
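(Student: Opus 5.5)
This is a performance-difference / telescoping argument in the style of Kakade–Langford. Two conventions need fixing first. $\mathcal B(\tau)$ is the set of steps $t$ at which the second branch of Equation~\eqref{eq:context_transition} fires. We also assume rollouts terminate within a finite step limit $T_{\max}$, so that all sums are finite.

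The key object is the no-compression continuation value of the \emph{current} context along a compressed rollout. Let $V_t := Q^{R}_{\varnothing}(h_t, z_t)$ be the expected terminal reward if, from the environment state after $h_t$ with agent context $z_t$, we continue with compression disabled. Here $\xi$ is written loosely as $(h_t,\cdot)$; the value depends on the restored environment state and the context only. We then compare $V$ at consecutive steps.

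\textbf{Step 1 (endpoints).} At $t=1$ we have $z_1=u=h_1$, so $\mathbb E_u[V_1]=\mathcal J(\varnothing)$. At termination $T$, $V_{T+1}=R(\tau)$, so $\mathbb E[V_{T+1}]=\mathcal J(\mathcal C)$. Hence
\[
\mathcal J(\mathcal C)-\mathcal J(\varnothing)=\mathbb E_{\tau\sim p_{\mathcal C,B}}\Big[\textstyle\sum_{t=1}^{T}(V_{t+1}-V_t)\Big].
\]

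\textbf{Step 2 (split each increment).} Write each increment as
\[
V_{t+1}-V_t=\big(Q^{R}_{\varnothing}(\xi_t,z_{t+1})-Q^{R}_{\varnothing}(\xi_t,\bar z_{t+1})\big)+\big(Q^{R}_{\varnothing}(\xi_t,\bar z_{t+1})-V_t\big).
\]
The first bracket is zero when $t\notin\mathcal B(\tau)$, because then $z_{t+1}=\bar z_{t+1}$. When $t\in\mathcal B(\tau)$, it equals $-H_{\varnothing}(\xi_t,c_{t+1})$ by Definition~\ref{def:boundary-compression-effect}.

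\textbf{Step 3 (the second bracket is a martingale difference).} The second bracket has zero conditional expectation given $(h_t,z_t)$. The reason is that under $p_{\mathcal C,B}$, the pair $a_t\sim\mathcal M(\cdot\mid z_t)$, $o_t\sim\mathcal E(\cdot\mid h_t,a_t)$ is drawn from exactly the same kernel as under no-compression continuation from $(h_t,z_t)$. Moreover, $\bar z_{t+1}=z_t\oplus(a_t,o_t)$ is the context that the no-compression continuation would use next. The one-step Bellman identity then gives
\[
V_t=\mathbb E\big[Q^{R}_{\varnothing}(\xi_t,\bar z_{t+1})\mid h_t,z_t\big].
\]
Termination is handled by setting $Q$ equal to the realized reward at terminal states. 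By the tower property these terms vanish, which leaves exactly Equation~\eqref{eq:boundary-performance-difference}. The final sentence of the theorem follows because $\mathcal J(\varnothing)$ does not depend on $\mathcal C$.

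\textbf{Main obstacle.} The hardest step is making Step 3 rigorous. First, $Q^{R}_{\varnothing}$ must be a function of the restored environment state together with the context alone; this requires the environment to be Markov in $h_{t+1}$ and the agent to be Markov in $z$. Second, the compressor's randomness in $c_{t+1}$ must be independent of future agent and environment noise given $\bar z_{t+1}$, so that conditioning on the boundary does not bias the continuation value. Third, the step limit must be handled consistently. Rather than indexing by absolute time, I would let the step limit count remaining steps from the boundary, so that $Q_{\varnothing}$ uses the same remaining horizon in both branches. Under these measurability conditions, the telescoping sum over the bounded horizon is a standard argument, and the interchange of sum and expectation is justified by boundedness of $R$.
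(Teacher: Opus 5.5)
Your proof is correct, but it telescopes over a different index than the paper does.

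\textbf{Your route.} You use the per-step potential $V_t=Q^R_\varnothing(h_t,z_t)$ along the compressed rollout. Each increment splits into a compression jump, equal to $-H_\varnothing(\xi_t,c_{t+1})$ at boundaries and zero elsewhere, plus a one-step Bellman martingale difference. The endpoints $V_1$ and $V_{T+1}$ then give $\mathcal J(\varnothing)$ and $\mathcal J(\mathcal C)$ directly.

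\textbf{The paper's route.} The paper evaluates the potential only at boundary states, $V_\varnothing(s_j)=Q^R_\varnothing(s_j,\bar z_{t_j+1})$, and telescopes over the $K$ boundaries using $Q^R_\varnothing(s_j,c_j)=\mathbb E[V_\varnothing(s_{j+1})\mid s_j,c_j]$. This identity rests on the observation that execution from $c_j$ coincides with no-compression execution up to the next boundary, which is implicitly a stopping-time statement. The telescoping leaves $\mathbb E[\mathbf 1_{\{K>0\}}(R(\tau)-V_\varnothing(s_1))]$. A separate coupling argument is then needed: the two systems agree before the first boundary, so $\{K>0\}$ and the law of $s_1$ coincide. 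This gives $\mathbb E_{\mathcal C}[\mathbf 1_{\{K>0\}}V_\varnothing(s_1)]=\mathbb E_{\varnothing}[\mathbf 1_{\{K>0\}}R(\tau)]$ and cancels the $K=0$ terms.

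\textbf{Comparison.} Your version needs only the one-step kernel identity and optional stopping over a bounded horizon. It absorbs both the stretches between boundaries and the stretch before the first boundary into vanishing martingale terms. It is therefore the more elementary, standard Kakade--Langford form. The paper's version is phrased directly in the boundary quantities that \texttt{PAIR} actually estimates. Summing your one-step identities between consecutive boundaries recovers the paper's inter-boundary identity, so the two proofs are equivalent in content.

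\textbf{One caution on your step-limit remark.} The continuation from a boundary at step $t$ must inherit the episode's residual budget, which is a function of absolute time already encoded in $h_{t+1}$. It must not get a fresh budget counted from the boundary. With a fresh budget, $V_{T+1}=R(\tau)$ fails on rollouts that timed out. Your Step~3 identity also breaks, because $V_t$ and $Q^R_\varnothing(\xi_t,\bar z_{t+1})$ would no longer have horizons differing by exactly one step. The paper's proof relies on the same convention implicitly.
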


Full proof is provided in
Appendix~\ref{app:boundary-effect-proof}.
Equation~\eqref{eq:boundary-performance-difference} shows that the gap to
no-compression execution is exactly the expected cumulative outcome hazard
over compression boundaries. Since $\mathcal J(\varnothing)$ is independent
of $\mathcal C$, reducing boundary-level outcome hazard directly improves
the objective in Problem~\ref{def:problem}.
Interaction burden does not appear separately in the objective in
Problem~\ref{def:problem}, since timeout failures are already reflected in
terminal reward. It nevertheless provides a finer diagnostic of how
compression approaches such failures: additional recovery steps consume
the remaining execution budget even when the continuation still succeeds.

\subsection{Compression Burden Is Common, but Severe Effects Are Sparse}
\label{sec:hazard-distribution}

\begin{figure}[!t]
    \centering
    \captionsetup[subfigure]{font=small,skip=1pt}

    \makebox[\linewidth][c]{%
        \begin{subfigure}[t]{0.255\linewidth}
            \centering
            \includegraphics[width=\linewidth]
            {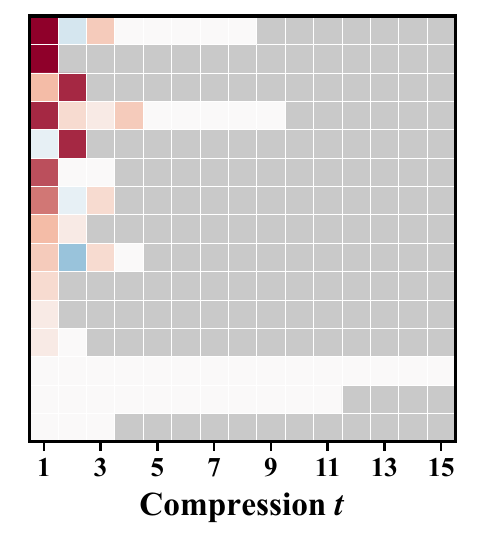}
            \caption{Failed.}
            \label{fig:hazard-failed}
        \end{subfigure}
        \hspace{0.012\linewidth}
        \begin{subfigure}[t]{0.185\linewidth}
            \centering
            \includegraphics[width=\linewidth]
            {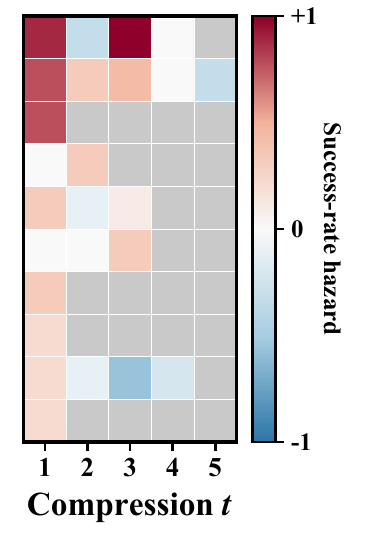}
            \caption{Successful.}
            \label{fig:hazard-successful}
        \end{subfigure}
        \hspace{0.012\linewidth}
        \begin{subfigure}[t]{0.305\linewidth}
            \centering
            \includegraphics[width=\linewidth]
            {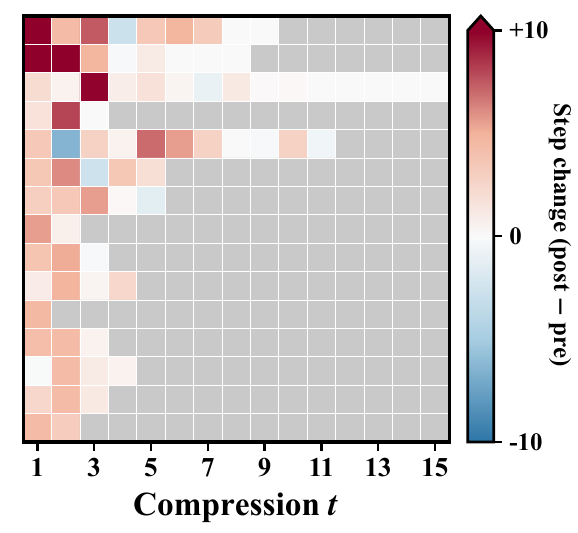}
            \caption{Interaction burden.}
            \label{fig:boundary-step-change}
        \end{subfigure}%
    }

    \vspace{-2mm}
    \caption{
        \textbf{Boundary-level effects of recurrent compression.}
        Panels (a) and (b) show outcome hazard in failed and successful
        trajectories; panel (c) shows interaction burden. Red indicates
        degradation or additional steps, blue indicates improvement, and
        gray denotes absent boundaries.
    }
    \label{fig:hazard-distribution}
    \vspace{-5mm}
\end{figure}

We apply the counterfactual protocol with $m=9$ continuations per
condition to 197 compression boundaries from 82 trajectories: 67
boundaries from 15 failed trajectories and 130 from 67 successful ones.
Outcome degradation is sparse and not determined by terminal outcome.
Among failed trajectories, 43 boundaries leave continuation success
unchanged, 20 reduce it, and four improve it. Twelve of the 15
trajectories contain a positive-hazard boundary, but 11 contain only one
or two. Conversely, 27 boundaries in successful trajectories reduce
continuation success across 22 trajectories. Thus,
Panels~\ref{fig:hazard-failed} and~\ref{fig:hazard-successful} show that
failed trajectories contain mostly neutral compressions, while harmful
compressions can still precede successful execution. Interaction burden
is more common but usually moderate: as shown in
Panel~\ref{fig:boundary-step-change}, 151 of 197 boundaries increase
continuation length, but only 12 add more than five steps and three add
more than ten. Severe outcome degradation is also rare, with only 11
boundaries reducing continuation success by more than $0.5$. Overall,
recovery costs are common, while severe effects on success or execution
length remain localized. OfficeBench and $\tau^2$-Bench show the same
qualitative pattern; additional results appear in
Appendix~\ref{app:additional-boundary-effects}.

This structure limits trajectory-level attribution. ACON contrasts
successful full-context and failed compressed trajectories and revises
the compression guideline from traces~\citep{kang2025acon}.
When several compressions precede termination, the replacement
must be inferred retrospectively~\citep{zhang2025which}.
Our results show why this is difficult: most compressions preceding
failure are neutral, while harmful compressions also occur in successful
trajectories. This motivates localizing boundaries that reduce
continuation success or increase recovery burden, then using their
paired contexts and counterfactual continuations as targeted evidence
for improving the compression policy.

\section{Counterfactual Continuation-Guided Prompt Adaptation}
\label{sec:method}

Figure~\ref{fig:pair-framework} summarizes \texttt{PAIR}, which uses
paired counterfactual continuations to localize harmful compressions,
revise the relevant sections of a structured prompt, and select the
adapted prompt through end-to-end validation. Theorem~\ref{thm:boundary-performance-difference}
connects outcome hazard to the compressor objective. All model
parameters remain fixed; only the prompt is adapted.

\noindent\textbf{Step 1: Collecting compressed trajectories.}
Let $P_0$ denote the original structured compression prompt and
$\mathcal C_{P_0}$ its induced compressor. For each task in
$\mathcal D_{\mathrm{train}}$, we collect one trajectory using
$\mathcal C_{P_0}$ under the target context budget and retain every
realized compression boundary, regardless of terminal outcome.
This is necessary because successful trajectories may still contain
outcome-degrading boundaries, while interaction burden occurs in both
successful and failed executions.

\noindent\textbf{Step 2: Verifying harmful compression boundaries.}
For every boundary, we apply the counterfactual protocol from
Section~\ref{sec:counterfactual-evaluation} using three-round
successive halving. We omit the subscript $\varnothing$ from empirical
boundary-effect estimates in this section. Each boundary first receives
one PRE/POST continuation pair. After round $r$, active boundaries are
ranked by the normalized harm score
\begin{equation}
s_t^{(r)}
=
\max\!\left\{
\frac{\widehat H_t^{(r)}}{\tau_H},
\frac{\widehat B_t^{(r)}}{\tau_B}
\right\},
\label{eq:halving-score}
\end{equation}
where $\widehat H_t^{(r)}$ and $\widehat B_t^{(r)}$ use all $r$ pairs
observed so far. After each of the first two rounds, only the top half
receive another pair, for at most three pairs per boundary. Using all
continuations allocated to each surviving boundary, we retain those
satisfying $\widehat H_t\geq\tau_H$ or
$\widehat B_t\geq\tau_B$. The two criteria capture reduced continuation
success and increased recovery cost, respectively. Each retained
boundary provides a localized contrast consisting of the pre-compression
context, generated summary, and counterfactual continuation traces.

\noindent\textbf{Step 3: Adapting the compression prompt.}
For each retained boundary, an optimizer LLM receives the pre- and
post-compression contexts together with their counterfactual
continuations. It diagnoses how the summary altered subsequent
execution, such as by omitting necessary state, misrepresenting
completed progress, or inducing redundant recovery actions. These
localized diagnoses are then aggregated to revise the original prompt
$P_0$.

Unlike ACON, which allows the compression guideline to be rewritten as
a whole~\citep{kang2025acon}, \texttt{PAIR} keeps the template's section
structure, required fields, compression scope, and output format fixed,
and revises only the guidance within each section. This preserves
compatibility with the surrounding context-management pipeline while
allowing the adapted prompt to replace the original template directly.
We generate five candidate prompts
$\mathcal P_{\mathrm{cand}}=\{P^{(1)},\ldots,P^{(5)}\}$, each shared
across the target task family rather than specialized to an individual
trajectory.

\noindent\textbf{Step 4: Selecting the adapted prompt.}
Boundary evidence is collected under the original compressor, whereas
a revised prompt may change both summaries and the boundaries visited
during execution. Local improvements therefore require end-to-end
validation. We evaluate each candidate on the subset of training tasks
whose baseline trajectories contain the most compression events and
select the one with the highest pass rate, breaking ties by the lowest
mean number of interaction steps. This criterion prioritizes task
completion while using execution length to distinguish equally
successful prompts and preserve execution budget against timeout.
The selected prompt $P^\star$ is used without further adaptation for
held-out evaluation.

\begin{figure}[t]
    \centering
    \includegraphics[width=\linewidth]
    {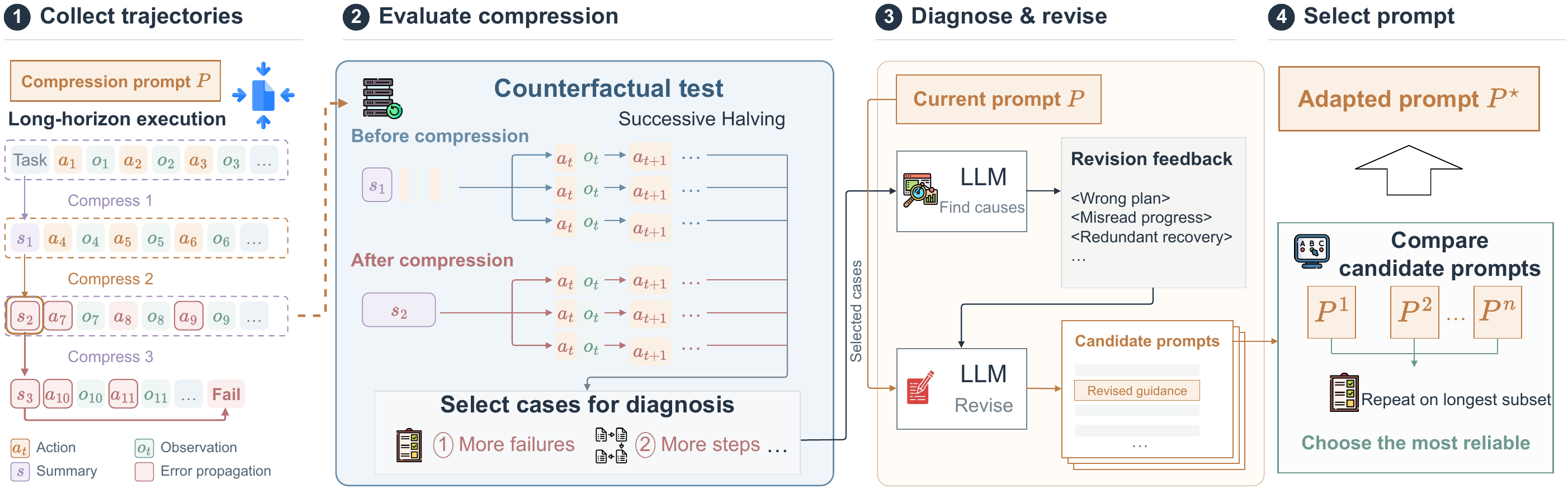}
    \caption{
        \textbf{Overview of \texttt{PAIR}.}
        Starting from trajectories generated with an initial structured
        compression prompt, \texttt{PAIR} uses paired counterfactual
        continuations to localize harmful compression boundaries.
        An optimizer diagnoses their downstream effects and revises the
        relevant prompt sections; candidate prompts are then evaluated
        end-to-end to select the adapted compression policy.
    }
    \label{fig:pair-framework}
    \vspace{-4mm}
\end{figure}

% \newpage
\section{Experiments}

\subsection{Experimental Setup}
\label{subsec:setup}

\noindent\textbf{Benchmarks.}
We evaluate primarily on two long-horizon agent benchmarks.
\textbf{AppWorld} contains stateful API-use tasks across simulated
applications and users~\citep{trivedi2024appworld}; we adapt prompts on
its training split and evaluate on the 168-task test-normal split.
\textbf{OfficeBench} covers multi-application office workflows
involving documents, spreadsheets, and email
\citep{wang2024officebench}. We additionally include
\textbf{$\tau^2$-Bench Retail} as a stress test in a structurally
different environment, where the agent must satisfy policy constraints
while interacting with both a simulated user and backend tools
\citep{barres2025tau}. Further details are provided in
Appendix~\ref{app:experimental_details}.

\noindent\textbf{Metrics.}
We evaluate each method over three independent runs.
\textbf{Acc.} is the mean task success rate, while
\textbf{Pass$^3$} is the fraction of tasks solved in all three runs and
measures cross-run reliability. We also report interaction
\textbf{Steps} and peak input length \textbf{Peak}, measured in
thousands of tokens. Our efficiency analysis reports
cumulative tokens, including all agent and compressor inputs and
outputs.

\noindent\textbf{Baselines.}
We use \textbf{No Compression}, which retains full history, as
the uncompressed reference and compare \texttt{PAIR} with four
compression baselines: \textbf{(1) FIFO} retains recent history
within the context budget; \textbf{(2) LLMLingua-2} removes
tokens using a learned classifier~\citep{pan2024llmlingua};
\textbf{(3) Prompting} uses the original OpenClaw structured
compression prompt~\citep{openclaw_compaction}; and
\textbf{(4) ACON}~\citep{kang2025acon} adapts compression guidelines
from full-context and compressed trajectories. We evaluate its
utility-oriented \texttt{ACON-UT} and compression-aware
\texttt{ACON-UTCO} variants. ACON and \texttt{PAIR} use the same
initial prompt, adaptation tasks, optimizer model, context budget,
candidate count, and end-to-end selection budget, differing only in
their adaptation procedures.

\noindent\textbf{Compression Scopes.}
We evaluate prompting-based methods under two scopes.
\textbf{History-only} compression summarizes the interaction history
while retaining the fixed prefix separately, as in OpenClaw and Hermes
Agent~\citep{openclaw_compaction,hermes_context_compression}.
\textbf{Prefix-conditioned} compression additionally exposes the fixed
prefix to the compressor, while preserving it unchanged in the
downstream context to enable KV-cache reuse, as in DeepSeek Harness and
Codex~\citep{deepseek_harness_compaction,openai_codex_compaction}.
All prompting-based methods use the same compressor, context budget,
and execution configuration.

\begin{table*}[t]
\centering
\caption{
Results on the \textbf{AppWorld} test-normal split by difficulty,
aggregated over three independent runs. Acc. is mean task success,
with across-run standard deviation shown as a subscript; Pass$^3$ is
the fraction of tasks solved in all three runs; Steps and Peak are
mean interaction steps and peak input length ($10^3$ tokens).
\textit{No compression} is the uncompressed reference. History-only
and prefix-conditioned compression differ in whether the compressor
receives the fixed prefix, which remains separately visible to the
agent in both settings. Bold denotes the best compressed result within
each block; gray rows indicate \texttt{PAIR}.
}
\vspace{-0.1in}
\label{tab:appworld_gpt_luna}
\resizebox{\textwidth}{!}{
\begin{tabular}{lccccccccccccc}
\toprule
\multirow{2}{*}{Method} & 
\multicolumn{4}{c}{Average (168)} & 
\multicolumn{3}{c}{Easy (57)} & 
\multicolumn{3}{c}{Medium (48)} & 
\multicolumn{3}{c}{Hard (63)} \\
\cmidrule(lr){2-5} \cmidrule(lr){6-8} \cmidrule(lr){9-11} \cmidrule(lr){12-14}
& Acc. $\uparrow$ & Pass$^3$ $\uparrow$ & Steps $\downarrow$ &Peak $\downarrow$
& Acc. $\uparrow$ & Pass$^3$ $\uparrow$ & Steps $\downarrow$
& Acc. $\uparrow$ & Pass$^3$ $\uparrow$ & Steps $\downarrow$
& Acc. $\uparrow$ & Pass$^3$ $\uparrow$ & Steps $\downarrow$\\
\midrule
\rowcolor{gray!30}\multicolumn{14}{c}{\textbf{Agent:} \texttt{GPT-5.6 Luna} / \textbf{Compressor:} \texttt{GPT-5.6 Luna}}\\
\midrule
\textit{No compression} & \textit{84.1}\std{0.9} & \textit{78.6} & \textit{12.1} & \textit{12.53}& \textit{97.7}\std{1.0} & \textit{93.0} & \textit{9.6} & \textit{86.8}\std{2.4} & \textit{81.2} & \textit{12.2} & \textit{69.8}\std{0.0} & \textit{63.5} & \textit{14.4} \\
\midrule[0.1pt]
\multicolumn{14}{l}{\textbf{History-Only Compression}}\\
FIFO & 53.2\std{1.2} & 45.2 & 30.0 & \textbf{7.14} & 95.9\std{2.0} & 93.0  & 9.6 & 47.9\std{4.2} & 37.5 & 33.7 & 18.5\std{3.3} & 9.5 & 42.8 \\
LLMLingua-2 &  {71.4}\std{2.3} &  {61.3} & 19.2 & 10.53  &97.7\std{2.7} &   94.7  &   \textbf{8.7} & 77.8\std{2.4} &   66.7 & 17.6 & 42.9\std{6.3} & 27.0 & 30.0 \\
Prompting & 78.6\std{2.7} & 68.5 &   17.0 &  9.23 & 94.2\std{2.0} & 89.5  & 10.0 & 81.2\std{2.1} & 66.7 &   15.5 &  62.4\std{6.4} &  50.8 &  24.4 \\
ACON-UT& 81.2\std{1.2} & 70.8 & 18.9 & 9.36 &94.2\std{4.4} & 82.5 & 10.4 &  83.3\std{5.5} & 70.8 & 18.5 & 67.7\std{4.0} &  \textbf{60.3} & 26.8 \\
ACON-UTCO & 77.0\std{2.3} & 66.1 & 19.2 & 9.24 &95.9\std{2.0} & 91.2 & 10.7 & 78.5\std{2.4} & 66.7 & 18.4 & 58.7\std{4.2} & 42.9 &  27.4  \\
\rowcolor{gray!15}\texttt{PAIR} (Ours) & \textbf{84.7}\std{1.7} & \textbf{79.8} & \textbf{16.6} & 9.03 &\textbf{98.2}\std{1.8} & \textbf{96.5} & 10.1 & \textbf{89.6}\std{2.1} & \textbf{85.4} & \textbf{15.3} & \textbf{68.8}\std{1.8} & \textbf{60.3} &  \textbf{23.3}  \\
\midrule
\multicolumn{14}{l}{\textbf{Prefix-Conditioned Compression}}\\
Prompting & 81.0\std{1.6} & 72.0 & 15.3 & 9.51&94.7\std{1.8} & 87.7  & \textbf{10.0} & 82.6\std{2.4} & 75.0 & 14.5 & 67.2\std{2.4} & 55.6 & 20.8 \\
ACON-UT & 81.3\std{0.9} & 72.6 & 17.3 & 10.21 &95.9\std{2.0} & 91.2 & 10.6 & 83.3\std{1.2} & 72.9 & 15.7 & 66.7\std{0.9} & 55.6 &  24.5  \\
ACON-UTCO & 78.2\std{2.5} & 67.3 & 17.2 & \textbf{8.69} &95.3\std{1.0} & 89.5 & 10.4 & 77.1\std{4.2} & 66.7 & 15.9 & 63.5\std{4.2} & 47.6 &  24.4  \\
\rowcolor{gray!15}\texttt{PAIR} (Ours) & \textbf{85.9}\std{2.4} & \textbf{80.4} & \textbf{15.2} & 9.47 & \textbf{97.7}\std{2.0} & \textbf{94.7} & 10.1 & \textbf{89.6}\std{0.0} & \textbf{85.4} & \textbf{14.1} & \textbf{72.5}\std{5.1} & \textbf{63.5} &  \textbf{20.7}  \\
\bottomrule
\end{tabular}
}
\vspace{-4mm}
\end{table*}

\begin{table*}[t]
\captionsetup[subtable]{font=small,skip=2pt}
\centering
\caption{
Results on \textbf{OfficeBench} and $\boldsymbol{\tau^2}$-\textbf{Bench Retail} domain,
aggregated over three runs. Acc. is mean task success, Pass$^3$ is
success across all three runs, Steps is mean interaction length, and
Peak is peak input length ($10^3$ tokens). History-only and
prefix-conditioned compression differ in whether the compressor
receives the fixed prefix, which remains visible to the agent in both
settings. Bold denotes the best result within each block; grey rows
indicate \texttt{PAIR}.
}
\vspace{-0.1in}
\label{tab:qa_officebench_main}

\begin{subtable}[t]{0.48\textwidth}
\centering
\caption{OfficeBench}
% \vspace{-0.08in}
\label{tab:4_officebench_main}
\resizebox{0.92\linewidth}{!}{%
\setlength{\tabcolsep}{6pt}
\begin{tabular}{lcccc}
\toprule
Method & Acc. $\uparrow$ & Pass$^3$ $\uparrow$  & Steps $\downarrow$ & Peak $\downarrow$\\
\midrule
\rowcolor{gray!30}
\multicolumn{5}{c}{\textbf{Agent:} \texttt{GPT-5.6 Luna} / \textbf{Compressor:} \texttt{GPT-5.6 Luna}} \\
\midrule
\textit{No Compression} & \textit{82.1}\std{4.8} & \textit{75.8}  & \textit{11.2}   & \textit{7.02}\\
\midrule
\multicolumn{5}{l}{\textbf{History-Only Compression}}\\
FIFO                    & 62.1\std{1.8}   & 54.7   & 23.2   & \textbf{3.48}   \\
LLMLingua-2           & 76.1\std{1.2}   & 68.4   & 14.4   & 4.14   \\ 
Prompting             & 76.5\std{2.4}   & 65.3   & \textbf{11.9}   & 4.00 \\
ACON-UT                 & 77.2\std{1.6}   & 68.4   & 13.3   & 4.20   \\
ACON-UTCO               & 73.0\std{5.8}   & 60.0   & 13.6   & 3.95   \\
\rowcolor{gray!15}
\texttt{PAIR} (Ours)         & \textbf{81.1}\std{2.8}   & \textbf{70.5}   & \textbf{11.9}   & 3.89 \\
\midrule
\multicolumn{5}{l}{\textbf{Prefix-Conditioned Compression}}\\
Prompting             & 78.9\std{2.8}   & 66.3   & 13.3   & 4.08 \\
ACON-UT                 & 75.1\std{0.6}   & 61.1   & 15.5  & 4.26   \\
ACON-UTCO               & 78.9\std{3.8}   & 65.3   & 13.6   & \textbf{4.01}   \\
\rowcolor{gray!15}
\texttt{PAIR} (Ours)         & \textbf{81.4}\std{0.6}   & \textbf{72.6}   & \textbf{11.8}   & 4.03 \\
\bottomrule
\end{tabular}%
}
\end{subtable}
\hfill
\begin{subtable}[t]{0.48\textwidth}
\centering
\caption{$\tau^2$-Bench Retail}
% \vspace{-0.08in}
\label{tab:3_qa_main}
\resizebox{0.92\linewidth}{!}{%
\setlength{\tabcolsep}{6pt}
\begin{tabular}{lcccc}
\toprule
Method & Acc. $\uparrow$ & Pass$^3$ $\uparrow$  & Steps $\downarrow$ & Peak $\downarrow$\\
\midrule
\rowcolor{gray!30}
\multicolumn{5}{c}{\textbf{Agent:} \texttt{GPT-5.6 Luna} / \textbf{Compressor:} \texttt{GPT-5.6 Luna}} \\
\midrule
\textit{No Compression} & 85.8\std{5.4}   & 72.5   & 12.6   & 6.79\\
\midrule
\multicolumn{5}{l}{\textbf{History-Only Compression}}\\
FIFO                    & 64.2\std{1.9}   & 45.0   & 17.7   & 5.43   \\
LLMLingua-2           & 75.0\std{4.4}   & 60.0   & \textbf{12.9}   & 5.96   \\ 
Prompting             & 74.2\std{8.2}   & 55.0   & 13.0   & 5.40 \\
ACON-UT                 & 73.3\std{6.3}   & 55.0   & 13.9   & 5.77   \\
ACON-UTCO               & \textbf{80.8}\std{3.8}   & 60.0   & 15.7   & \textbf{5.34}   \\
\rowcolor{gray!15}
\texttt{PAIR} (Ours)         & \textbf{80.8}\std{7.6}   & \textbf{62.5}   & 13.5   & 5.53 \\
\midrule
\multicolumn{5}{l}{\textbf{Prefix-Conditioned Compression}}\\
Prompting             & 75.0\std{5.0}   & 57.5   & 13.2   & 5.47 \\
ACON-UT                 & 73.3\std{9.5}   & 52.5   & \textbf{12.9}   & 5.89   \\
ACON-UTCO               & 73.3\std{3.8}   & 52.5   & 13.7   & \textbf{5.36}   \\
\rowcolor{gray!15}
\texttt{PAIR} (Ours)         & \textbf{77.5}\std{2.5}   & \textbf{60.0}   & 13.4   & 5.48 \\
\bottomrule
\end{tabular}%
}
\end{subtable}

\vspace{-0.2in}
\end{table*}

\subsection{Main Results}

Results are reported in
Tables~\ref{tab:appworld_gpt_luna}
and~\ref{tab:qa_officebench_main}. We evaluate prompting-based methods
under both history-only and prefix-conditioned compression.

\noindent\textbf{\texttt{PAIR} consistently improves reliability across
compression settings.}
\texttt{PAIR} achieves the highest Pass$^3$ among compressed methods
across all benchmarks and both compression scopes, outperforming the
best ACON variant by $2.1\%$--$10.0\%$. It also improves accuracy by
$1.8\%$--$3.9\%$ on AppWorld and OfficeBench while matching the best
ACON result on $\tau^2$-Bench Retail. Although neither history-only nor
prefix-conditioned compression dominates across all benchmarks,
\texttt{PAIR} improves reliability under both. On AppWorld, its largest
gains occur on medium and hard tasks, while easy-task performance is
already near saturation.

\noindent\textbf{Adapted compression can match uncompressed execution.}
On AppWorld, history-only \texttt{PAIR} reaches $84.7\%$ accuracy and
$79.8\%$ Pass$^3$, exceeding the uncompressed results of $84.1\%$ and
$78.6\%$. Prefix-conditioned \texttt{PAIR} also exceeds the
uncompressed Pass$^3$. On OfficeBench, both variants exceed $81\%$
accuracy, close to the uncompressed result of $82.1\%$, while using
substantially shorter contexts. Although a larger gap remains on
$\tau^2$-Bench Retail, prompt adaptation recovers much of the
performance lost to compression.

\noindent\textbf{The gains require neither longer contexts nor
substantial additional interaction.}
On OfficeBench, \texttt{PAIR} reduces peak input length from $7.02$K
tokens without compression to $3.89$K and $4.03$K while keeping the
number of steps close to the uncompressed reference. On AppWorld and
OfficeBench, it matches or reduces the steps required by Prompting. On
$\tau^2$-Bench Retail, it adds only $0.5$ and $0.2$ steps while
improving Pass$^3$ by $7.5\%$ and $5.0\%$, respectively.

\subsection{Cross-Model Transfer}
\label{sec:cross-agent-transfer}

\begin{wrapfigure}{r}{0.38\textwidth}
\vspace{-17mm}
\centering
\includegraphics[width=\linewidth]
{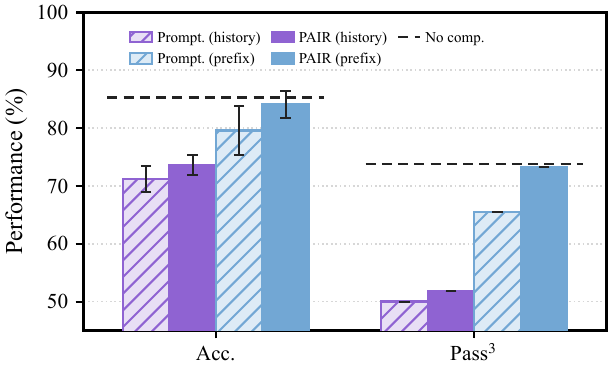}
\vspace{-6mm}
\caption{
\textbf{Zero-shot transfer to MiniMax-M3 on AppWorld.}
}
\label{fig:cross-agent-transfer}
\vspace{-3mm}
\end{wrapfigure}
We test whether the learned compression guidance transfers beyond the
model configuration used for adaptation. Prompts optimized with
GPT-5.6 Luna as both agent and compressor are applied without further
adaptation to a configuration in which both components use MiniMax-M3.
As shown in Figure~\ref{fig:cross-agent-transfer}, transferred
\texttt{PAIR} improves both accuracy and Pass$^3$ under both
compression scopes on AppWorld. The largest gain occurs under
prefix-conditioned compression, where accuracy increases from
$79.6\%$ to $84.1\%$ and Pass$^3$ from $65.5\%$ to $73.2\%$,
approaching the MiniMax-M3 no-compression results of $85.3\%$ and
$73.8\%$.
On OfficeBench, transfer improves accuracy under both scopes and
Pass$^3$ under history-only compression, while preserving Pass$^3$
under prefix-conditioned compression; full results are provided in
Appendix~\ref{app:cross-agent-transfer}. These results show that the
adapted guidance remains useful when both the downstream agent and
compressor are replaced, rather than depending only on the model
configuration used during adaptation.

\subsection{Deployment Efficiency}
\label{sec:efficiency}

\begin{figure}[t]
    \centering
    \includegraphics[width=0.97\linewidth]
    {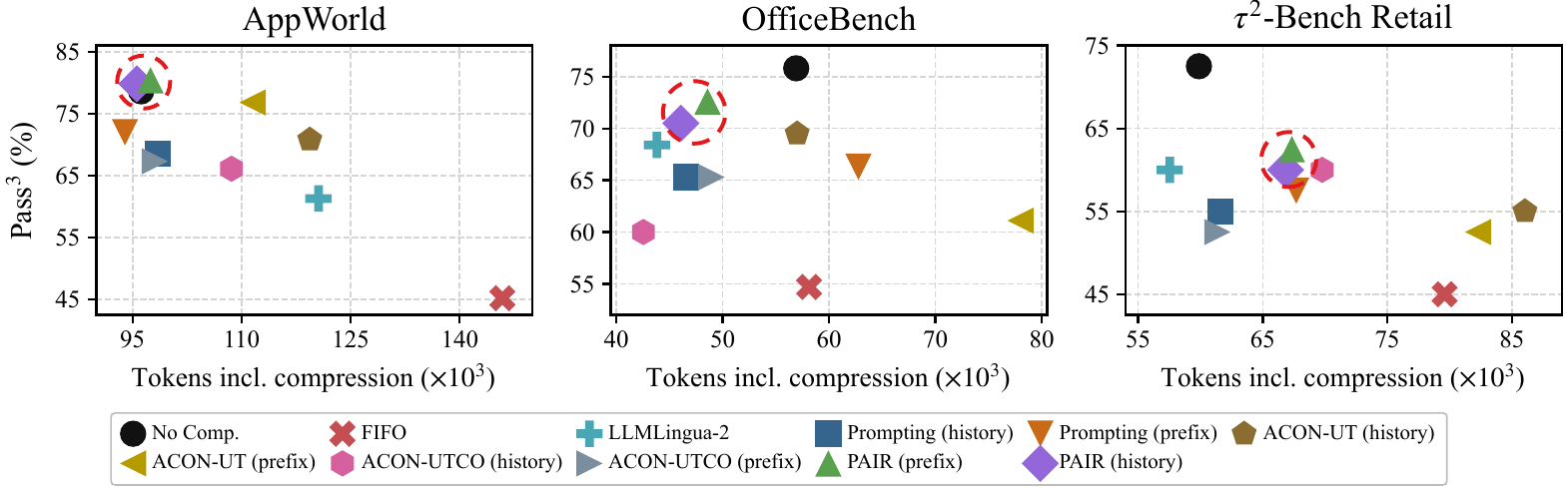}
    \caption{
\textbf{Reliability--efficiency after accounting for compression.}
Pass$^3$ versus average per-task token use, including agent and
compressor inputs and outputs. Higher and further left is better;
red circles mark \texttt{PAIR}. LLMLingua-2's local computation is
excluded.
}
    \label{fig:efficiency-tradeoff}
    \vspace{-4mm}
\end{figure}

\textbf{PAIR improves deployment reliability without systematically
increasing total token use.}
Figure~\ref{fig:efficiency-tradeoff} compares each \texttt{PAIR}
variant with the unadapted Prompting baseline under the same benchmark
and compression scope. \texttt{PAIR} improves Pass$^3$ in all six
comparisons by $5.0\%$--$11.3\%$, while using fewer tokens in four.
The remaining increases are $3.7\%$ on AppWorld and $8.5\%$ on
$\tau^2$-Bench. The largest efficiency gain occurs under
prefix-conditioned compression on OfficeBench, where Pass$^3$ rises
from $66.3\%$ to $72.6\%$ as token use falls from $62.8$K to $48.6$K.
The reliability gains therefore do not generally result from greater
inference expenditure.

\noindent\textbf{Compression does not always reduce total computation.}
Compression bounds the active context, but compressor overhead can
offset the resulting agent-side savings. After including this
overhead, \texttt{PAIR} matches or improves the best ACON Pass$^3$
under all six benchmark--scope combinations and uses fewer tokens in
five. The exception is prefix-conditioned compression on
$\tau^2$-Bench, where token use increases from $61.4$K to $67.3$K
while Pass$^3$ improves from $52.5\%$ to $60.0\%$. Relative to no
compression, \texttt{PAIR} uses essentially the same number of tokens
on AppWorld while attaining higher Pass$^3$, and uses $15\%$--$19\%$
fewer tokens on OfficeBench. On $\tau^2$-Bench, it uses approximately
$12\%$ more tokens and remains below the uncompressed reliability,
showing that a bounded active context does not guarantee lower
cumulative computation.

\section{Related Work}
\noindent\textbf{General-purpose context compression.}
Prior work compresses context through token selection, textual rewriting, or latent representations. Token-pruning methods include Selective Context and the LLMLingua family~\citep{li2023selective,jiang2023llmlingua,jiang2024longllmlingua,pan2024llmlingua}, while TACO--RL learns task-aware token selection from downstream rewards~\citep{shandilya2025taco}. RECOMP studies abstractive rewriting~\citep{xu2024recomp}, and AutoCompressor, Gist tokens, and ICAE encode long contexts into compact latent representations~\citep{chevalier2023adapting,mu2023gist,ge2024icae}. These methods generally compress a fixed input once, whereas long-horizon agents require recurrent compression whose outputs shape subsequent actions and later compressions.

\noindent\textbf{Context management for long-horizon agents.}
Systems such as OpenClaw and Hermes Agent periodically replace interaction histories with structured natural-language checkpoints~\citep{openclaw_compaction,openclaw_pruning,hermes_context_compression}. ReSum and SUPO adapt agents to compressed histories through post-training~\citep{wu2025resum,lu2025supo}, whereas ACON keeps the agent fixed and adapts compression guidelines from contrasts between full-context and compressed trajectories~\citep{kang2025acon}. We follow the fixed-agent setting but use repeated counterfactual continuations at individual compression boundaries to obtain localized evidence for adapting the compression prompt.

\section{Conclusion}
In this paper, we studied how recurrent context compression affects frozen
long-horizon agents. Using paired counterfactual continuations from the same
environment state, we attributed downstream behavioral changes to individual
compressions while reducing confounding from prior compression and rollout
variability. Our analysis shows that severe compression-induced degradation
is concentrated at a few harmful boundaries rather than accumulating uniformly
across the trajectory. Motivated by this localized structure, we introduced
\texttt{PAIR}, which uses harmful boundaries and their downstream consequences
as signals to diagnose compression errors and adapt a structured compression
prompt. Across benchmarks, \texttt{PAIR} improves task performance and
cross-run reliability while keeping the downstream agent fixed. These results
show that localized boundary-level evidence can guide context compression for
long-horizon agents.

%%%%%%%%%%%%%%%%%%%%%%%%%%%%%%%%%%%%%%%%%%%%%%%%%%%%%%%%%%%%

\bibliographystyle{iclr2026_conference}
\bibliography{main}

\newpage
\appendix
\section*{\Large{Appendix}}
\appendixtoc

\newpage

\section{Limitations and Future Work}

\texttt{PAIR} is designed for offline adaptation to a target task family.
Its counterfactual evaluation requires restorable environment states and
terminal continuations from both the pre- and post-compression contexts.
Successive halving reduces repeated sampling by concentrating additional
rollouts on boundaries with stronger evidence of harm, but every realized
boundary still receives at least one PRE/POST pair. Attribution cost
therefore grows with the number of compression boundaries and the remaining
execution horizon, making \texttt{PAIR} most directly applicable to
checkpointable or simulated environments.

Once the compression prompt is selected, \texttt{PAIR} requires no
additional model calls beyond standard deployment-time compression.
However, substantial shifts in tasks, tools, or execution environments may
require collecting new boundary-level evidence and readapting the prompt.
Developing cheaper boundary-effect estimators, extending adaptation to
irreversible environments, and incorporating boundary feedback online are
important directions for future work.

% \section{Extended Discussion on Related Work}

\section{Notations and Symbols}\label{app:notation}

We summarize the mathematical notations used throughout the paper in Table~\ref{tab:notation}.
\begin{center}
\small
\setlength{\LTcapwidth}{\linewidth}
\renewcommand{\arraystretch}{1.15}
\begin{longtable}{@{}l p{0.74\linewidth}@{}}
\caption{Summary of notations used in this paper.}
\label{tab:notation}\\
\toprule
\textbf{Symbol} & \textbf{Description} \\
\midrule
\endfirsthead

\toprule
\textbf{Symbol} & \textbf{Description} \\
\midrule
\endhead

\midrule
\multicolumn{2}{r}{\small Continued on next page} \\
\endfoot

\bottomrule
\endlastfoot

\multicolumn{2}{c}{\textit{Recurrent Context Compression}}\\

$\mathcal{M}$ &
Frozen downstream agent, including its language model, system prompt,
action interface, output parser, and decoding procedure \\

$\mathcal{E}$ &
Interactive environment \\

$u\sim\mathcal{D}$ &
Task instruction sampled from task distribution $\mathcal{D}$ \\

$\tau=(u,a_1,o_1,\ldots,a_T,o_T)$ &
Agent rollout terminating at step $T$ \\

$h_t$ &
Complete interaction history before step $t$,
$h_t=(u,a_1,o_1,\ldots,a_{t-1},o_{t-1})$ \\

$z_t$ &
Mutable textual context visible to the agent at step $t$ \\

$a_t$ &
Action sampled from the frozen agent,
$a_t\sim\mathcal{M}(\cdot\mid z_t)$ \\

$o_t$ &
Observation sampled from the environment,
$o_t\sim\mathcal{E}(\cdot\mid h_t,a_t)$ \\

$\oplus$ &
Textual concatenation operator \\

$\bar z_{t+1}$ &
Pre-compression context after appending the latest interaction,
$\bar z_{t+1}=z_t\oplus(a_t,o_t)$ \\

$B$ &
Maximum token budget for the agent-visible context \\

$|x|$ &
Token length of textual context $x$ \\

$\mathcal{C}$ &
Stochastic context compressor, with
$c\sim\mathcal{C}(\cdot\mid x,B)$ and $|c|\leq B$ \\

$c_{t+1}$ &
Replacement context generated from $\bar z_{t+1}$ at a compression
boundary \\

$\mathfrak{C}$ &
Admissible class of compressors \\

$p_{\mathcal{C},B}(\tau\mid u)$ &
Rollout distribution induced by the frozen agent, environment,
compressor $\mathcal{C}$, and context budget $B$ \\

$R(\tau)$ &
Environment-defined terminal reward of rollout $\tau$ \\

$\mathcal{J}(\mathcal{C})$ &
Expected downstream reward of compressor $\mathcal{C}$,
$\mathcal{J}(\mathcal{C})
=\mathbb{E}_{u\sim\mathcal{D},
\,\tau\sim p_{\mathcal{C},B}(\cdot\mid u)}[R(\tau)]$ \\

$\mathcal{C}^{*}$ &
Optimal compressor in Problem~\ref{def:problem} \\

\midrule
\multicolumn{2}{c}{\textit{Boundary-Level Counterfactual Evaluation}}\\

$\xi_t$ &
Boundary state associated with a realized compression,
$\xi_t=(h_{t+1},\bar z_{t+1})$ \\

$x$ &
Candidate context from which execution is continued, typically
$x\in\{\bar z_{t+1},c_{t+1}\}$ \\

$\varnothing$ &
Reference execution without further compression after the evaluated
boundary \\

$Q_{\varnothing}^{R}(\xi_t,x)$ &
Expected terminal reward when execution continues from context $x$ at
boundary state $\xi_t$ without further compression \\

$Q_{\varnothing}^{L}(\xi_t,x)$ &
Expected number of remaining interaction steps under the same
no-further-compression continuation \\

$\mathbf{D}_{\varnothing}(\xi_t,c_{t+1})$ &
Boundary-level compression effect, consisting of outcome hazard and
interaction burden \\

$H_{\varnothing}(\xi_t,c_{t+1})$ &
Outcome hazard:
$Q_{\varnothing}^{R}(\xi_t,\bar z_{t+1})
-Q_{\varnothing}^{R}(\xi_t,c_{t+1})$ \\

$B_{\varnothing}(\xi_t,c_{t+1})$ &
Interaction burden:
$Q_{\varnothing}^{L}(\xi_t,c_{t+1})
-Q_{\varnothing}^{L}(\xi_t,\bar z_{t+1})$ \\

$m$ &
Number of independent counterfactual continuations per condition in a
fixed-budget boundary evaluation \\

$\tau_{t,i}^{x}$ &
The $i$-th continuation rollout from context $x$ at boundary $t$ \\

$\ell(\tau)$ &
Number of interaction steps in continuation rollout $\tau$ \\

$\widehat{\mathbf{D}}_{\varnothing}
=(\widehat H_{\varnothing},\widehat B_{\varnothing})$ &
Empirical boundary effect estimated from counterfactual continuations \\

$\mathcal{B}(\tau)$ &
Set of compression boundaries encountered in rollout $\tau$ \\

\midrule
\multicolumn{2}{c}{\textit{Prompt Adaptation with \texttt{PAIR}}}\\

$P$ &
Natural-language compression prompt \\

$\pi_P(c\mid x,B)$ &
Compression policy induced by prompting the frozen compressor model
with $P$ \\

$P_0$ &
Original structured compression prompt used to collect boundary
evidence \\

$\mathcal{C}_{P_0}$ &
Compressor induced by the original prompt $P_0$ \\

$\mathcal{D}_{\mathrm{train}}$ &
Training-task set used for boundary mining and prompt selection \\

$r$ &
Successive-halving round index, $r\in\{1,2,3\}$ \\

$\widehat H_t^{(r)},\,\widehat B_t^{(r)}$ &
Running outcome-hazard and interaction-burden estimates at boundary
$t$, computed from the first $r$ PRE/POST continuation pairs \\

$\tau_H,\,\tau_B$ &
Predefined thresholds for outcome hazard and interaction burden,
respectively \\

$s_t^{(r)}$ &
Normalized harm score used for successive halving,
$s_t^{(r)}
=\max\{\widehat H_t^{(r)}/\tau_H,
\widehat B_t^{(r)}/\tau_B\}$ \\

$\widehat H_t,\,\widehat B_t$ &
Final empirical outcome hazard and interaction burden computed from all
continuations allocated to surviving boundary $t$ \\

$\mathcal{P}_{\mathrm{cand}}
=\{P^{(1)},\ldots,P^{(5)}\}$ &
Candidate compression prompts produced from the aggregated boundary
diagnoses \\

$\mathrm{Pass}^{k}(P)$ &
Fraction of tasks completed successfully in all $k$ independent runs
under prompt $P$ \\

$\mathrm{Pass}(P)$ &
Pass rate used for end-to-end candidate-prompt selection \\

$\mathrm{Steps}(P)$ &
Mean number of interaction steps under prompt $P$ \\

$P^{*}$ &
Selected prompt, chosen lexicographically by highest
$\mathrm{Pass}(P)$ and then lowest $\mathrm{Steps}(P)$ \\

\end{longtable}
\end{center}

\section{Theoretical Proofs}
\subsection{Proof of Theorem~\ref{thm:boundary-performance-difference}}
\label{app:boundary-effect-proof}

\textbf{Theorem~\ref{thm:boundary-performance-difference}} (Boundary performance-difference identity).
\textit{Let $\varnothing$ denote execution without further compression. For any compressor $\mathcal C$,
\begin{equation}
\mathcal J(\mathcal C)-\mathcal J(\varnothing)
=
-\mathbb E_{\substack{
u\sim\mathcal D,\;
\tau\sim p_{\mathcal C,B}(\cdot\mid u)
}}
\!\left[
\sum_{t\in\mathcal B(\tau)}
H_{\varnothing}(\xi_t,c_{t+1})
\right].
\label{app-eq:boundary-performance-difference}
\end{equation}
Thus, minimizing expected cumulative boundary hazard is equivalent to maximizing the objective in Problem~1.}

\begin{proof}
Fix a task $u$ and let
\[
V_{\varnothing}(\xi_t)
=
Q_{\varnothing}^R(\xi_t,\bar z_{t+1})
\]
denote the expected return when execution continues from boundary state
$\xi_t$ without further compression. By definition,
\begin{equation}
-H_{\varnothing}(\xi_t,c_{t+1})
=
Q_{\varnothing}^R(\xi_t,c_{t+1})
-
V_{\varnothing}(\xi_t).
\label{eq:no-compression-advantage}
\end{equation}

Consider a rollout $\tau\sim p_{\mathcal C,B}(\cdot\mid u)$ with ordered
compression boundaries $t_1<\cdots<t_K$, and write
$s_j=\xi_{t_j}$ and $c_j=c_{t_j+1}$.
Let $s_{K+1}$ denote the terminal state and set
$V_{\varnothing}(s_{K+1})=R(\tau)$.
After applying $c_j$, execution is identical to the no-compression
continuation until the next compression boundary. Hence,
\begin{equation}
Q_{\varnothing}^R(s_j,c_j)
=
\mathbb E\!\left[
V_{\varnothing}(s_{j+1})
\mid s_j,c_j
\right].
\end{equation}
Using Equation~\eqref{eq:no-compression-advantage}, the tower property,
and linearity of expectation,
\begin{align}
-\mathbb E_{\tau\sim p_{\mathcal C,B}(\cdot\mid u)}
\left[
\sum_{j=1}^{K} H_{\varnothing}(s_j,c_j)
\right]
&=
\mathbb E_{\tau\sim p_{\mathcal C,B}(\cdot\mid u)}
\left[
\sum_{j=1}^{K}
\bigl(
V_{\varnothing}(s_{j+1})
-
V_{\varnothing}(s_j)
\bigr)
\right]
\nonumber\\
&=
\mathbb E_{\tau\sim p_{\mathcal C,B}(\cdot\mid u)}
\left[
\mathbf 1_{\{K>0\}}
\bigl(
R(\tau)-V_{\varnothing}(s_1)
\bigr)
\right].
\label{eq:no-compression-telescoping}
\end{align}

Before the first compression boundary, execution under $\mathcal C$ and
under no compression is identical. Therefore, the event $\{K>0\}$, the
distribution of $s_1$, and the return on trajectories with $K=0$ are
the same in both systems. Consequently,
\begin{equation}
\mathbb E_{\tau\sim p_{\mathcal C,B}(\cdot\mid u)}
\left[
\mathbf 1_{\{K>0\}}V_{\varnothing}(s_1)
\right]
=
\mathbb E_{\tau\sim p_{\varnothing}(\cdot\mid u)}
\left[
\mathbf 1_{\{K>0\}}R(\tau)
\right].
\end{equation}
The $K=0$ terms cancel, so Equation~\eqref{eq:no-compression-telescoping}
gives
\[
-\mathbb E_{\tau\sim p_{\mathcal C,B}(\cdot\mid u)}
\left[
\sum_{t\in\mathcal B(\tau)}
H_{\varnothing}(\xi_t,c_{t+1})
\right]
=
\mathcal J_u(\mathcal C)-\mathcal J_u(\varnothing).
\]
Taking expectation over $u\sim\mathcal D$ proves
Equation~\eqref{app-eq:boundary-performance-difference}.
\end{proof}

\section{Additional Experiment Results}

\subsection{Extra Experimental Settings}\label{app:experimental_details}

\noindent\textbf{Models.}
We run every agent, compressor, and \texttt{PAIR} optimizer call through an
OpenAI-compatible chat interface, with one backbone per experiment family.
In the main experiments the backbone is \textsc{GPT-5.6-Luna}, accessed via
the OpenAI Responses API with reasoning effort set to \texttt{medium}; the
agent acts through a single native function call
(\texttt{execute\_python} on AppWorld, \texttt{execute\_action} on
OfficeBench, the domain tools on $\tau^2$-bench) with at most 2{,}048 output
tokens per step, and the same model serves as the history compressor (at most
8{,}192 output tokens per compression) and as the \texttt{PAIR} optimizer.
In the transfer experiments the backbone is \textsc{MiniMax-M3}, served
through Ollama's hosted OpenAI-compatible endpoint. Compression prompts are
identical across backbones; only the model behind them changes. On
$\tau^2$-bench the simulated customer is the benchmark's default user
simulator, \texttt{gpt-4.1-2025-04-14} at temperature $0$, so run-to-run
variation comes from the agent side only. Each reported number averages three
independent runs with different seeds.

\textbf{Splits.}
For AppWorld and $\tau^2$-Bench Retail, we use the official training and test splits.
For OfficeBench and 8-objective QA, we use the stratified training and test splits provided in the ACON repository.
\textbf{Context Budgets.}
To induce approximately two to three compression events per task on average, we set the context budget to 4,096 tokens for AppWorld and 8-objective QA, and to 2,048 tokens for OfficeBench and $\tau^2$-Bench Retail.

\subsection{Boundary Effects on Additional Benchmarks}
\label{app:additional-boundary-effects}

We repeat the boundary-level analysis on OfficeBench and
$\tau^2$-Bench Retail using $m=9$ continuations for each pre- and
post-compression condition. The analysis covers 211 compression
boundaries from 71 OfficeBench trajectories and 200 boundaries from
67 $\tau^2$-Bench trajectories.

\begin{figure}[!h]
    \centering
    \captionsetup[subfigure]{font=small,skip=2pt}

    \begin{subfigure}[b]{0.35\textwidth}
        \centering
        \includegraphics[width=\linewidth]
        {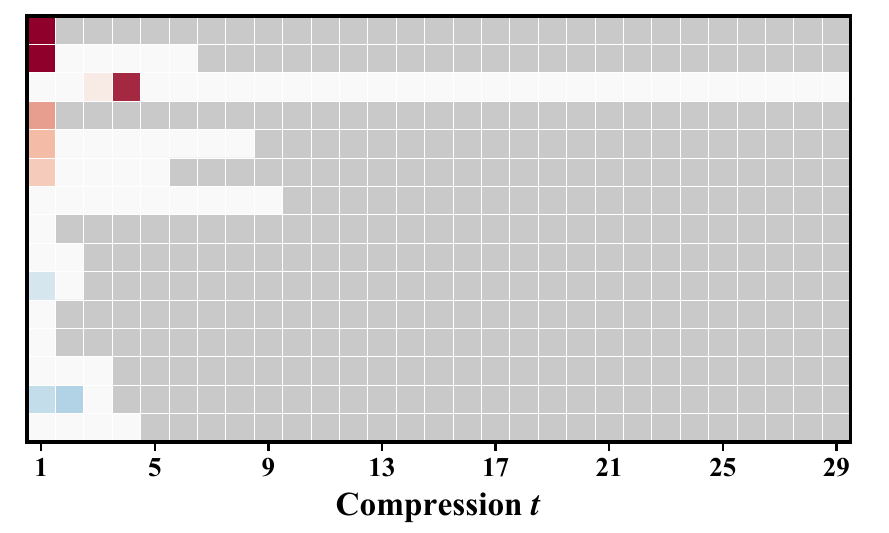}
        \caption{Failed trajectories.}
        \label{fig:officebench-hazard-failed}
    \end{subfigure}
    \hfill
    \begin{subfigure}[b]{0.20\textwidth}
        \centering
        \includegraphics[width=\linewidth]
        {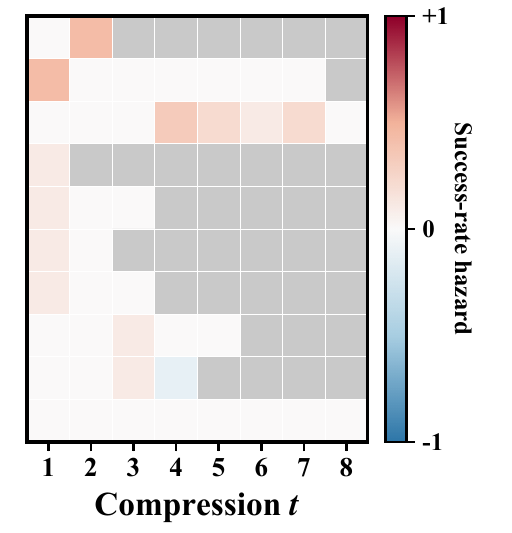}
        \caption{Successful trajectories.}
        \label{fig:officebench-hazard-successful}
    \end{subfigure}
    \hfill
    \begin{subfigure}[b]{0.39\textwidth}
        \centering
        \includegraphics[width=\linewidth]
        {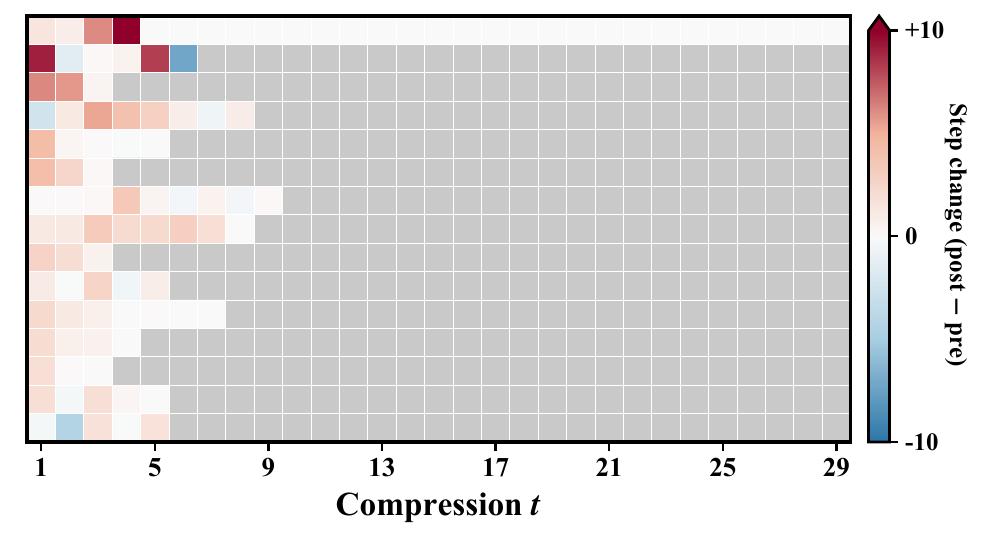}
        \caption{Interaction burden.}
        \label{fig:officebench-step-burden}
    \end{subfigure}

    \caption{
        \textbf{Boundary-level compression effects on OfficeBench.}
        Panel (a) includes all failed trajectories. Panels (b) and
        (c) show the ten and fifteen trajectories with the largest
        outcome hazard and interaction burden, respectively.
        Red denotes degradation or additional steps, blue denotes
        improvement, and gray denotes absent boundaries.
    }
    \label{fig:officebench-boundary-effects}
\end{figure}

\begin{figure}[!h]
    \centering
    \captionsetup[subfigure]{font=small,skip=2pt}

    \begin{subfigure}[b]{0.29\textwidth}
        \centering
        \includegraphics[width=\linewidth]
        {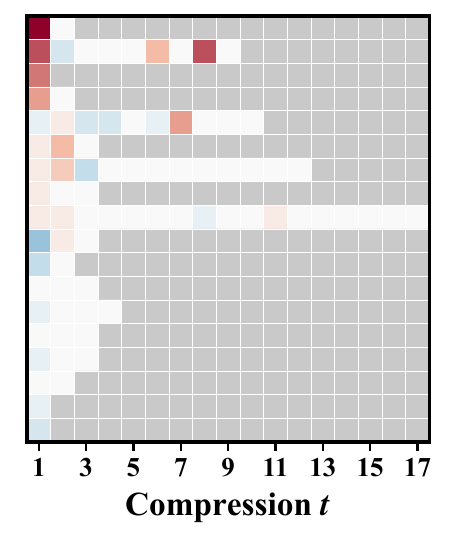}
        \caption{Failed trajectories.}
        \label{fig:tau2-hazard-failed}
    \end{subfigure}
    \hfill
    \begin{subfigure}[b]{0.24\textwidth}
        \centering
        \includegraphics[width=\linewidth]
        {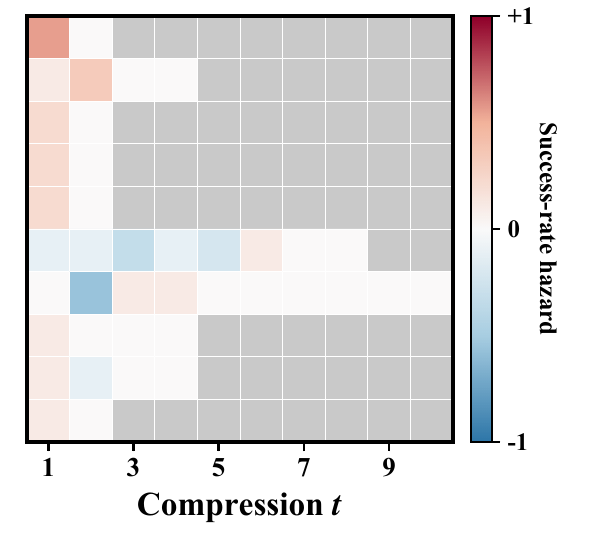}
        \caption{Successful trajectories.}
        \label{fig:tau2-hazard-successful}
    \end{subfigure}
    \hfill
    \begin{subfigure}[b]{0.41\textwidth}
        \centering
        \includegraphics[width=\linewidth]
        {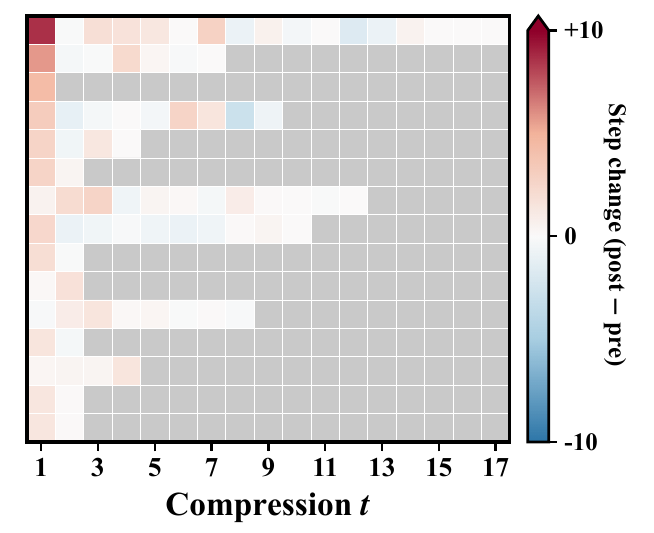}
        \caption{Interaction burden.}
        \label{fig:tau2-step-burden}
    \end{subfigure}

    \caption{
        \textbf{Boundary-level compression effects on
        $\tau^2$-Bench Retail.}
        Panel (a) includes all failed trajectories. Panels (b) and
        (c) show the ten and fifteen trajectories with the largest
        outcome hazard and interaction burden, respectively.
        Colors follow Figure~\ref{fig:officebench-boundary-effects}.
    }
    \label{fig:tau2-boundary-effects}
\end{figure}

Figures~\ref{fig:officebench-boundary-effects}
and~\ref{fig:tau2-boundary-effects} show the same qualitative pattern
as AppWorld: most compressions leave continuation success unchanged,
while severe effects are concentrated at a small number of boundaries.
On OfficeBench, 184 of 211 boundaries leave continuation success
unchanged. Four satisfy $\widehat{H}_t \geq 0.5$ and seven satisfy
$\widehat{B}_t \geq 5$, yielding nine retained boundaries across six
trajectories after accounting for overlap. On $\tau^2$-Bench Retail,
141 of 200 boundaries leave continuation success unchanged. Seven
satisfy the outcome-hazard threshold and two satisfy the burden
threshold, yielding nine retained boundaries across eight trajectories.

Moderate interaction burden is more common. Compression increases
continuation length at 109 OfficeBench boundaries and 95
$\tau^2$-Bench Retail boundaries. Nevertheless, only seven and two
boundaries, respectively, add at least five steps. Severe outcome
degradation and interaction burden are therefore localized, although
smaller execution costs occur more broadly across compressed
trajectories.

Outcome hazard and interaction burden are only weakly coupled.
Their boundary-level Spearman correlations are $0.004$, $0.102$,
and $-0.054$ on AppWorld, OfficeBench, and $\tau^2$-Bench Retail,
respectively. Compression can therefore increase recovery effort
without immediately reducing continuation success, motivating the
use of both signals.

\subsection{Breakdown on OfficeBench}
\label{app:officebench-breakdown}

Table~\ref{tab:officebench_gpt_luna} reports results by the number of
applications involved in each task.

\begin{table*}[h!]
\centering
\caption{Results on \textbf{OfficeBench} , grouped by the number of applications
involved in each task. Acc. is mean task
success, with across-run standard deviation shown as a subscript;
Pass$^3$ is the fraction of tasks solved in all three runs; Steps and
Peak are mean interaction steps and peak input length ($10^3$ tokens).
\textit{No compression} is the uncompressed reference. History-only
and prefix-conditioned compression differ in whether the compressor
receives the fixed prefix, which remains separately visible to the
agent in both settings. Bold denotes the best compressed result within
each block; gray rows indicate \texttt{PAIR}.}
\vspace{-0.1in}
\label{tab:officebench_gpt_luna}
\resizebox{\textwidth}{!}{
\begin{tabular}{lccccccccccccc}
\toprule
\multirow{2}{*}{Method} & 
\multicolumn{3}{c}{Average (95)} & 
\multicolumn{3}{c}{1-APP (42)} & 
\multicolumn{3}{c}{2-APP (22)} & 
\multicolumn{3}{c}{3-APP (31)} \\
\cmidrule(lr){2-4} \cmidrule(lr){5-7} \cmidrule(lr){8-10} \cmidrule(lr){11-13}
& Acc. $\uparrow$ & Pass$^3$ $\uparrow$ & Steps $\downarrow$
& Acc. $\uparrow$ & Pass$^3$ $\uparrow$ & Steps $\downarrow$
& Acc. $\uparrow$ & Pass$^3$ $\uparrow$ & Steps $\downarrow$
& Acc. $\uparrow$ & Pass$^3$ $\uparrow$ & Steps $\downarrow$\\
\midrule
\rowcolor{gray!30}\multicolumn{13}{c}{\textbf{Agent:} \texttt{GPT-5.6 Luna} / \textbf{Compressor:} \texttt{GPT-5.6 Luna}}\\
\midrule
\textit{No compression} & \textit{82.1}\std{4.8} & \textit{75.8} & \textit{11.2} & \textit{92.9}\std{4.1} & \textit{88.1} & \textit{7.6} & \textit{81.8}\std{4.5} & \textit{77.3} & \textit{10.9} & \textit{67.7}\std{8.5} & \textit{58.1} & \textit{16.2} \\
\midrule[0.1pt]
\multicolumn{13}{l}{\textbf{History-Only Compression}}\\
FIFO & 62.1\std{1.8} & 54.7 & 23.2 & 81.7\std{1.4} & 73.8 & 10.3 & 56.1\std{2.6} & 50.0 & 28.2 & 39.8\std{3.7} & 32.3 & 37.0 \\
LLMLingua-2 & 76.1\std{1.2} & 68.4 & 14.4 & 89.7\std{1.4} & 83.3 & \textbf{8.0} & 81.8\std{0.0} & \textbf{77.3} & 14.0 & 53.8\std{4.9} & 41.9 & 23.2 \\
Prompting & 76.5\std{2.4} & 65.3 & \textbf{11.9} & 91.3\std{1.4} & 83.3 & 8.1 & 74.2\std{6.9} & 63.6 & \textbf{10.9} & 58.1\std{6.5} & 41.9 & 17.8 \\
ACON-UT & 77.2\std{1.6} & 68.4 & 13.3 & 91.3\std{1.4} & 85.7 & 9.0 & 78.8\std{2.6} & 72.7 & 12.7 & 58.1\std{3.2} & 45.2 & 19.6 \\
ACON-UTCO & 73.0\std{5.8} & 60.0 & 13.6 & 84.1\std{9.0} & 71.4 & 8.2 & 78.8\std{5.2} & 59.1 & 12.7 & 53.8\std{3.7} & 45.2 & 21.6 \\
\rowcolor{gray!15}\texttt{PAIR} (Ours) & \textbf{81.1}\std{2.8} & \textbf{70.5} & \textbf{11.9} & \textbf{92.9}\std{2.4} & \textbf{85.7} & 8.1 & \textbf{84.8}\std{2.6} & 72.7 & 11.8 & \textbf{62.4}\std{4.9} & \textbf{48.4} & \textbf{17.2} \\
\midrule
\multicolumn{13}{l}{\textbf{Prefix-Conditioned Compression}}\\
Prompting & 78.9\std{2.8} & 66.3 & 13.3 & 89.7\std{3.6} & 81.0 & 9.1 & 81.8\std{2.0} & 63.6 & 13.0 & 62.4\std{4.9} & 48.4 & 19.3 \\
ACON-UT & 75.1\std{0.6} & 61.1 & 15.5 & 85.7\std{4.1} & 71.4 & 9.9 & 83.3\std{5.2} & 72.7 & 15.2 & 54.8\std{3.2} & 38.7 & 23.4 \\
ACON-UTCO & 78.9\std{3.8} & 65.3 & 13.6 & 90.5\std{4.1} & 81.0 & 9.1 & \textbf{84.8}\std{2.6} & \textbf{68.2} & 13.1 & 59.1\std{4.9} & 41.9 & 20.2 \\
\rowcolor{gray!15}\texttt{PAIR} (Ours) & \textbf{81.4}\std{0.6} & \textbf{72.6} & \textbf{11.8} & \textbf{92.9}\std{2.4} & \textbf{88.1} & \textbf{8.5} & 81.8\std{0.0} & \textbf{68.2} & \textbf{11.2} & \textbf{65.6}\std{3.7} & \textbf{54.8} & \textbf{16.7} \\
\bottomrule
\end{tabular}
}
\end{table*}

The effect of compression becomes more pronounced as workflows span
more applications. Under no compression, accuracy decreases from
$92.9\%$ on 1-APP tasks to $67.7\%$ on 3-APP tasks, while the mean
execution length more than doubles. \texttt{PAIR}'s gains are likewise
largest on multi-application workflows. On 3-APP tasks,
prefix-conditioned \texttt{PAIR} improves over Prompting from
$62.4\%$ to $65.6\%$ accuracy and from $48.4\%$ to $54.8\%$ Pass$^3$,
while reducing Steps from $19.3$ to $16.7$. It also approaches the
uncompressed reference of $67.7\%$ accuracy, $58.1\%$ Pass$^3$, and
$16.2$ Steps. These results indicate that boundary-guided adaptation is
most beneficial when compression must preserve execution state across
longer, cross-application workflows.

\subsection{Extra Experiment on 8-Objective QA}

We also evaluate on \textbf{8-objective QA}~\citep{kwiatkowski2019natural, zhou2026mem}, a QA benchmark where agents use a search tool to answer eight questions and return a consolidated answer set.

\begin{table}[h!]
\centering
\scriptsize
\caption{
Results on \textbf{8-objective QA}, aggregated over three runs.
EM is mean exact match, and EM$^3$ is the fraction of objectives
answered exactly in all three runs. Peak and Total denote peak input
length and total agent--compressor token use per episode, respectively,
in $10^3$ tokens. \textit{No compression} is the uncompressed
reference. Bold denotes the best result within each block; gray rows
indicate \texttt{PAIR}.
}
\vspace{-0.1in}
\label{tab:qa_main}

\resizebox{0.56\columnwidth}{!}{%
\setlength{\tabcolsep}{3.5pt}
\begin{tabular}{lcccccc}
\toprule
Method
& EM $\uparrow$
& EM$^3$ $\uparrow$
& F1 $\uparrow$
& Steps $\downarrow$
& Peak $\downarrow$
& Total $\downarrow$ \\
\midrule
\textit{No compression}
& \textit{40.9}\std{1.8}
& \textit{29.8}
& \textit{54.2}
& \textit{18.5}
& \textit{14.53}
& \textit{181.83} \\
\midrule
\multicolumn{7}{l}{\textbf{History-Only Compression}} \\
FIFO
& 14.0\std{1.4} & 3.6 & 18.8 & 45.1 & \textbf{6.70} & 274.03 \\
LLMLingua-2
& 35.8\std{0.6} & 25.9 & 48.7 & 28.0 & 7.38 & 353.61 \\
Prompting
& 37.2\std{1.6} & 24.6 & 51.8 & 26.6 & 7.34 & 177.48 \\
ACON-UT
& 38.5\std{1.4} & 28.8 & 53.4 & \textbf{19.6} & 7.43 & \textbf{142.00} \\
ACON-UTCO
& 35.3\std{1.3} & 21.8 & 48.5 & 34.1 & 7.37 & 202.99 \\
\rowcolor{gray!15}
\texttt{PAIR} (Ours)
& \textbf{39.3}\std{0.5}
& \textbf{29.4}
& \textbf{53.7}
& 27.0
& 7.32
& 175.51 \\
\midrule
\multicolumn{7}{l}{\textbf{Prefix-Conditioned Compression}} \\
Prompting
& 42.7\std{1.2} & 33.8 & 55.9 & 22.5 & 7.39 & 159.53 \\
ACON-UT
& 43.0\std{1.1}
& 34.5
& 56.8
& \textbf{17.6}
& \textbf{7.27}
& \textbf{118.97} \\
ACON-UTCO
& 40.8\std{0.4} & 32.4 & 55.1 & 22.5 & 7.32 & 138.36 \\
\rowcolor{gray!15}
\texttt{PAIR} (Ours)
& \textbf{43.7}\std{0.3}
& \textbf{35.8}
& \textbf{57.5}
& 20.9
& 7.33
& 145.49 \\
\bottomrule
\end{tabular}%
}
\vspace{-3mm}
\end{table}

\noindent\textbf{The adapted prompts emphasize different functions of
a compressed checkpoint.}
A checkpoint simultaneously serves as a representation of past
execution and as input that can steer future agent behavior. Both
ACON-UT and \texttt{PAIR} affect these functions, but their adapted
rules place different emphasis on them.

\begin{promptbox}{ACON-UT: structuring downstream execution}
Maintain a State Table containing authentication state, identifiers,
retrieved values, pagination state, and instructions for reconstructing
session variables.

Maintain a Completed/Pending Action Ledger with the status, evidence,
and next action for every operation.

Before an irreversible action, require a preflight check covering the
target set, exclusions, credentials, content, and endpoint parameters.

Call the completion endpoint only after all completion checks pass.
\end{promptbox}

The ACON-UT guideline preserves useful state, but it also specifies how
that state should organize subsequent execution. Its state table tells
the next agent which variables to reconstruct; its action ledger assigns
status and a next operation to each item; and its preflight and
completion rules determine how side effects and task termination should
be handled. These are not merely decisions about which historical facts
to retain. They turn the generated checkpoint into an operational
interface for controlling the downstream agent.

\begin{promptbox}{\texttt{PAIR}: preserving the execution state}
Preserve the complete user goal and explicit constraints, including
exact predicates, qualifiers, versions, dates, jurisdictions, and
target count.

Reconcile the previous summary against the newest observed results
rather than preserving contradicted or stale state.

Move an item to Done only when the relevant answer or state is
supported; otherwise record it as attempted but unresolved.

Preserve exact file paths, function names, error messages, query
parameters, and other continuation-relevant literals.
\end{promptbox}

The \texttt{PAIR} guideline is centered more directly on the
construction of the compressed state. Its rules determine which task
semantics must remain invariant, how new evidence updates previously
recorded state, and how uncertainty and incomplete work should be
represented. It necessarily retains pending actions because they are
part of the execution state, but it derives them from the observed
trajectory rather than introducing a separate task-solving procedure.
Thus, the distinction is not that only one prompt affects downstream
behavior; any checkpoint can do so. Rather, ACON-UT explicitly uses the
checkpoint as a carrier of execution policy, whereas \texttt{PAIR}
more narrowly adapts how the compressor represents the state on which
the original agent policy operates.

This difference is reflected in the pattern of improvements in
Table~\ref{tab:qa_main}. Under history-only compression,
\texttt{PAIR} improves its starting prompt by $2.1\%$ EM, $4.8\%$
EM$^3$, and $1.9\%$ F1, while ACON-UT improves the same metrics by
$1.3\%$, $4.2\%$, and $1.6\%$. Under prefix-conditioned compression,
the corresponding gains are $1.0\%$, $2.0\%$, and $1.6\%$ for
\texttt{PAIR}, compared with $0.3\%$, $0.7\%$, and $0.9\%$ for
ACON-UT. Across both scopes, \texttt{PAIR} therefore produces the
larger improvement in downstream answer quality and attains the highest
EM, EM$^3$, and F1.

\noindent\textbf{ACON-UT's behavioral rules provide a direct mechanism
for reducing interaction steps.}
The different emphasis of the two guidelines also helps explain their
execution lengths. ACON-UT adds explicit rules governing how the
downstream agent should handle low-yield objectives and decide when to
stop.

\begin{promptbox}{ACON-UT: retry and completion policy}
After two semantically equivalent or non-informative searches for the
same issue, mark the issue as low-yield and retain the strongest
supported candidate or the explicit unresolved alternatives.

Stop research when every answer slot has a supported candidate or a
clearly labeled best-supported unresolved value.
\end{promptbox}

These rules provide two direct routes to shorter execution. First, the
retry limit suppresses additional searches once two attempts are judged
semantically redundant or uninformative. Second, the completion
condition allows the task to terminate without resolving every
objective: a best-supported unresolved value is sufficient to close an
answer slot. ACON-UT therefore changes not only the retained state but
also the downstream agent's search horizon and stopping criterion.

\begin{promptbox}{\texttt{PAIR}: preserving unresolved state}
Do not promote an answer, attribution, equivalence, or result that the
raw messages do not establish.

A search may be marked done even when it failed to resolve an answer,
but its unresolved outcome must remain explicit.

Retain unresolved work with the best supported state and the missing
evidence needed for resolution.
\end{promptbox}

The \texttt{PAIR} rules make a different distinction: completing a
search action does not imply that the underlying objective has been
resolved. An unsuccessful attempt is preserved as such, and its current
candidate cannot be upgraded beyond the available evidence. Because
\texttt{PAIR} introduces neither a fixed retry limit nor an alternative
completion criterion, the original downstream agent may continue with a
materially different search when execution budget remains.

The resulting step pattern is consistent across both compression
scopes. Under history-only compression, ACON-UT reduces execution from
$26.6$ steps with the initial prompt to $19.6$, whereas \texttt{PAIR}
uses $27.0$ steps. Under prefix-conditioned compression, ACON-UT reduces
execution from $22.5$ to $17.6$ steps, compared with $20.9$ for
\texttt{PAIR}. The repeated reduction is consistent with ACON-UT's
explicit retry and stopping rules. This is a valid route to lower
execution cost, but it is conceptually distinct from improving the
quality of the compressed state.

Taken together, the results reveal a quality--control trade-off.
ACON-UT obtains larger step reductions by using the checkpoint to
modify how the downstream agent searches and terminates.
\texttt{PAIR} concentrates its adaptation on preserving task and
evidence state, yielding the strongest EM, EM$^3$, and F1 under both
compression scopes. Consequently, ACON-UT's lower step count should
not be interpreted as evidence of better compression alone: part of
the gain follows from an additional behavioral policy embedded in the
generated checkpoint.

\subsection{Template-Controlled Comparison with ACON-UT}
\label{app:template-controlled-acon}

The original ACON proposer is instructed to rewrite the compression
prompt and add concrete rules for retaining essential state, but it
does not constrain the output schema. Candidate prompts may therefore
preserve, extend, or replace the section structure of the initial
OpenClaw prompt. Because \texttt{PAIR} keeps this structure fixed, the
additional flexibility available to ACON-UT could otherwise confound
the comparison.

ACON generates five candidate prompts, nominally from different random
subsets of regression examples. In our settings, however, the number
of available examples is smaller than its configured sample size of
30. The sampler consequently uses $k=\min(30,n)=n$, causing all five
candidates within a setting to receive the same complete evidence set.
We verified that their proposer inputs have identical hashes. Thus,
whether a candidate changes the checkpoint structure is not determined
by different evidence; it arises from stochastic proposer generation,
after which end-to-end validation selects one of the five candidates.

This observation also explains why we perform the structural ablation
only on AppWorld. The ACON-UT prompts selected on OfficeBench already
retain the original OpenClaw section skeleton used by \texttt{PAIR}, so
the reported OfficeBench comparison is already structure-controlled.
On AppWorld, in contrast, the selected ACON-UT prompts reorganize the
checkpoint schema. We therefore rerun ACON-UT on AppWorld while
requiring every candidate to retain the OpenClaw section structure.
The locked variant uses the same initial prompt, trajectory-level
analysis artifact, optimizer, candidate count, adaptation tasks, and
end-to-end selection protocol as the original ACON-UT run. Only the
instructions within the existing sections may be revised.

\begin{table}[h]
\centering
\scriptsize
\caption{
\textbf{Template-controlled comparison on AppWorld.}
ACON-UT (Locked) retains the OpenClaw section structure used by
\texttt{PAIR}. A separate OfficeBench ablation is unnecessary because
its selected ACON-UT prompts already preserve this structure. Results
are aggregated over three runs. Peak is measured in $10^3$ input
tokens. Bold denotes the best result within each scope; gray rows
indicate \texttt{PAIR}.
}
\vspace{-0.1in}
\label{tab:template-controlled-acon}

\resizebox{0.48\columnwidth}{!}{%
\setlength{\tabcolsep}{3.5pt}
\renewcommand{\arraystretch}{0.95}
\begin{tabular}{@{}lcccc@{}}
\toprule
Method
& Acc. $\uparrow$
& Pass$^3$ $\uparrow$
& Steps $\downarrow$
& Peak $\downarrow$ \\
\midrule
\multicolumn{5}{l}{\textbf{History-Only Compression}} \\
Prompting
& 78.6\std{2.7} & 68.5 & 17.0 & 9.23 \\
ACON-UT
& 81.2\std{1.2} & 70.8 & 18.9 & 9.36 \\
ACON-UT (Locked)
& 79.4\std{2.7} & 70.8 & \textbf{16.6} & 9.08 \\
\rowcolor{gray!15}
\texttt{PAIR} (Ours)
& \textbf{84.7}\std{1.7}
& \textbf{79.8}
& \textbf{16.6}
& \textbf{9.03} \\
\midrule
\multicolumn{5}{l}{\textbf{Prefix-Conditioned Compression}} \\
Prompting
& 81.0\std{1.6} & 72.0 & 15.3 & 9.51 \\
ACON-UT
& 81.3\std{0.9} & 72.6 & 17.3 & 10.21 \\
ACON-UT (Locked)
& 83.5\std{1.8} & 78.0 & 15.3 & 9.66 \\
\rowcolor{gray!15}
\texttt{PAIR} (Ours)
& \textbf{85.9}\std{2.4}
& \textbf{80.4}
& \textbf{15.2}
& \textbf{9.47} \\
\bottomrule
\end{tabular}%
}
\vspace{-3mm}
\end{table}

\noindent\textbf{Template flexibility does not account for ACON-UT's
gains.}
As shown in Table~\ref{tab:template-controlled-acon}, locking the
template leaves history-only Pass$^3$ unchanged at $70.8\%$. Accuracy
changes from $81.2\%$ to $79.4\%$, while interaction length decreases
from $18.9$ to $16.6$ steps and peak context decreases from $9.36$K to
$9.08$K tokens. Under prefix-conditioned compression, the locked
variant improves accuracy from $81.3\%$ to $83.5\%$ and Pass$^3$ from
$72.6\%$ to $78.0\%$, while reducing interaction length from $17.3$ to
$15.3$ steps and peak context from $10.21$K to $9.66$K tokens. The
ability to redesign the checkpoint is therefore not necessary for
ACON-UT to improve over its starting prompt. In this setting, fixing
the interface preserves or improves reliability while reducing
execution overhead.

\noindent\textbf{\texttt{PAIR}'s advantage persists under the matched
interface.}
With the section structure controlled, history-only \texttt{PAIR}
achieves $84.7\%$ accuracy and $79.8\%$ Pass$^3$, compared with
$79.4\%$ and $70.8\%$ for locked ACON-UT. The two methods require the
same $16.6$ interaction steps, and their peak contexts are nearly
identical. Under prefix-conditioned compression, \texttt{PAIR} reaches
$85.9\%$ accuracy and $80.4\%$ Pass$^3$, compared with $83.5\%$ and
$78.0\%$ for locked ACON-UT, while requiring slightly fewer steps and a
shorter peak context. The remaining performance differences therefore
cannot be explained by a more favorable checkpoint schema or a larger
prompt-design space.

The controlled comparison isolates the substantive difference between
the methods: the evidence used for adaptation. ACON-UT derives its
revisions from differences between complete successful and failed
trajectories. This signal contains differences in retained information,
planning, and stochastic execution, allowing trajectory-specific
procedures to become general compression rules even when the output
schema is fixed. \texttt{PAIR} instead uses paired continuations to
identify information whose removal changes success or induces recovery
at a particular compression boundary. The results are consistent with
this localized evidence producing more targeted state-preservation
rules, especially under history-only compression, where the reliability
gap remains substantial after the interface is matched.

This ablation does not imply that checkpoint structure is universally
irrelevant. Rather, it shows that structural flexibility is neither
required for ACON-UT's effectiveness nor sufficient to explain
\texttt{PAIR}'s gains. Template locking controls the interface, while
the distinction between trajectory-level planning induction and
boundary-level compression diagnosis remains.

\subsection{Sensitivity of Boundary Evaluation}
\label{app:boundary-sensitivity}

We examine the sensitivity of boundary evaluation to continuation
sampling and boundary-selection thresholds. All analyses use training
trajectories only and involve no additional prompt generation or
test-set evaluation.

\noindent\textbf{Continuation count.}
We first study the stability of fixed-budget boundary estimates. For
each boundary, we subsample
$m\in\{1,3,5,9\}$ continuations per condition from the nine available
PRE and POST rollouts. We repeat subsampling 1,000 times and compare
each estimate with the full nine-draw estimate, which serves as a
higher-sample reference rather than ground truth.

\begin{figure}[h]
    \centering
    \includegraphics[width=0.92\linewidth]
    {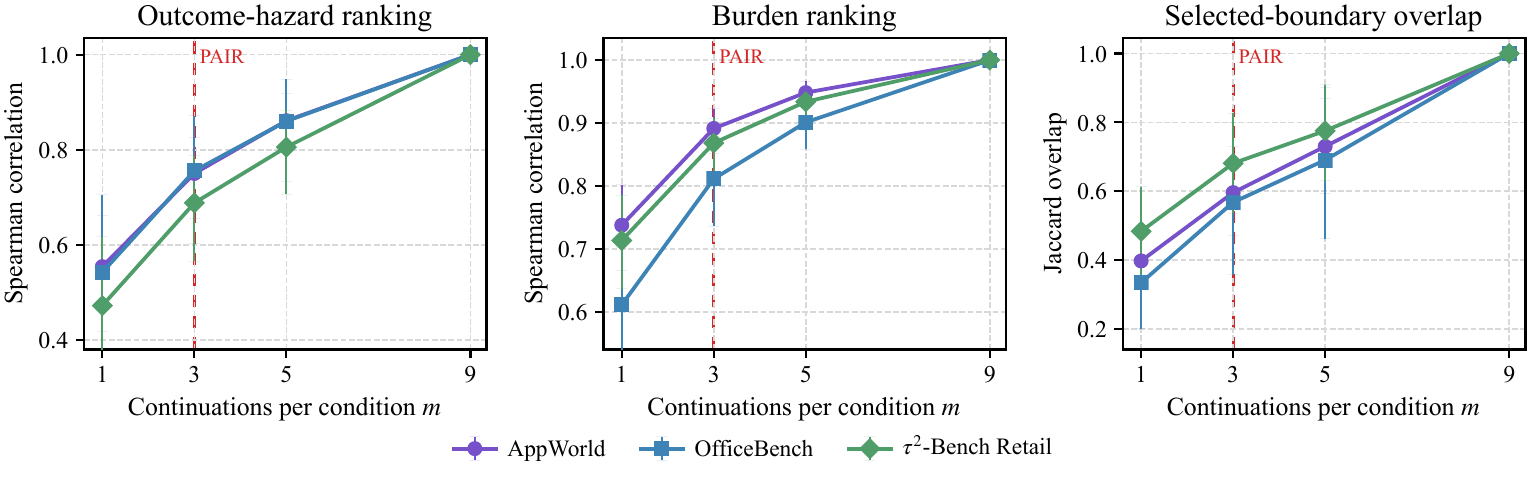}
    \caption{
        \textbf{Sensitivity to the number of continuations.}
        The first two panels report the rank correlation of outcome
        hazard and interaction burden with their nine-draw estimates;
        the third reports the Jaccard overlap of the resulting boundary
        selections. Points and error bars show the mean and
        2.5--97.5 percentiles over 1,000 subsamples. The red dashed line
        marks the three-pair cap used by \texttt{PAIR}'s
        successive-halving evaluation.
    }
    \label{fig:continuation-count-sensitivity}
    \vspace{-3mm}
\end{figure}

As shown in Figure~\ref{fig:continuation-count-sensitivity}, stability
improves consistently with additional continuations. With three
continuations per condition, rank correlations with the nine-draw
estimates range from $0.69$ to $0.76$ for outcome hazard and from
$0.81$ to $0.89$ for interaction burden, while selected-boundary
overlap ranges from $0.57$ to $0.68$. Increasing $m$ to five improves
these ranges to $0.81$--$0.86$, $0.90$--$0.95$, and
$0.69$--$0.78$, respectively, but increases the fixed evaluation cost
from six to ten continuations per boundary. We therefore cap boundary
evaluation at three PRE/POST pairs and use successive halving to
allocate this replication budget adaptively rather than uniformly.

\noindent\textbf{Successive-halving allocation.}
We next compare fixed three-pair evaluation with the three-round
successive-halving procedure used by \texttt{PAIR}. Using the
nine-draw selection as a higher-sample reference, fixed three-pair
evaluation attains $0.85$ recall and $0.64$ precision, whereas
successive halving attains $0.82$ recall and $0.66$ precision while
reducing the average evaluation from six to approximately $3.5$
continuations per boundary. Rule-support coverage is essentially
unchanged, and re-running prompt adaptation with the resulting evidence
preserves downstream performance. 
% Because the halving schedule was
% chosen after inspecting this diagnostic sweep, we treat these results
% as a retrospective robustness analysis rather than held-out validation.

\noindent\textbf{Selection thresholds.}
To isolate sensitivity to the selection rule from adaptive allocation,
we hold the continuation budget fixed at three PRE/POST pairs per
boundary and vary one threshold at a time around the default rule
$\widehat{H}_t\geq0.5$ or $\widehat{B}_t\geq5$. Under this fixed
three-pair diagnostic, $\widehat{H}_t$ changes in increments of $1/3$,
so the default hazard threshold requires an observed success-rate
difference of at least $2/3$.

\begin{figure}[h]
    \centering
    \includegraphics[width=0.82\linewidth]
    {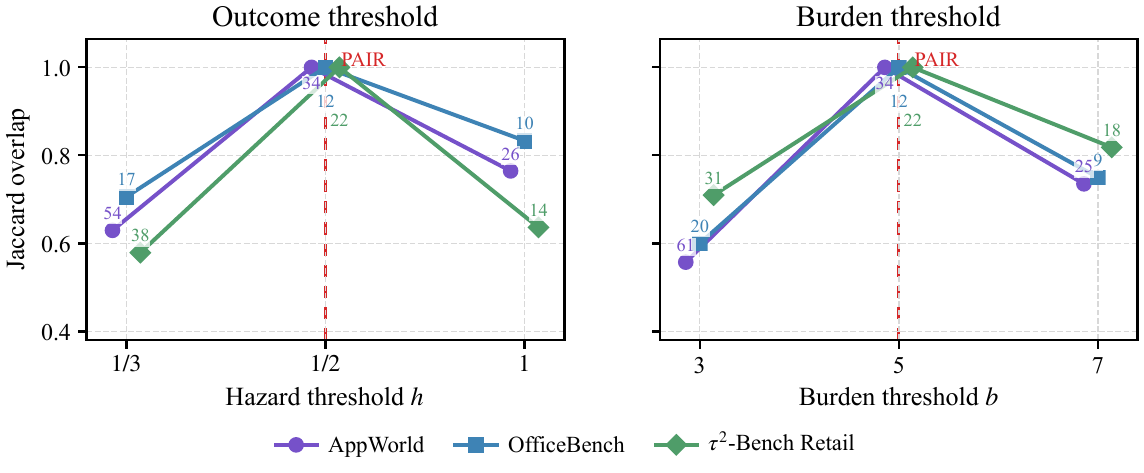}
    \caption{
        \textbf{Sensitivity to boundary-selection thresholds.}
        Holding the continuation budget fixed at three PRE/POST pairs,
        the left panel varies the outcome-hazard threshold while fixing
        the burden threshold at $5$; the right varies the burden
        threshold while fixing the hazard threshold at $0.5$.
        The vertical axis reports Jaccard overlap with the default
        selection, and labels beside the points report the number of
        selected boundaries. Red dashed lines mark the thresholds used
        by \texttt{PAIR}.
    }
    \label{fig:boundary-threshold-sensitivity}
    \vspace{-3mm}
\end{figure}

Figure~\ref{fig:boundary-threshold-sensitivity} shows the expected
trade-off between selectivity and coverage. Relaxing either threshold
adds lower-effect boundaries, whereas stricter thresholds retain a
smaller subset. Under the fixed three-pair diagnostic, the default rule
selects 34 AppWorld, 12 OfficeBench, and 22 $\tau^2$-Bench boundaries.
Across the one-at-a-time variations, the selected sets retain Jaccard
overlaps of $0.56$--$0.83$ with these default sets. We use the same
selection thresholds across all three benchmarks without
benchmark-specific tuning.

\subsection{Cost of Boundary-Level Adaptation}
\label{app:adaptation-cost}

Table~\ref{tab:adaptation-cost} reports the offline token cost of
\texttt{PAIR}. We account for initial trajectory collection, boundary
evaluation, prompt revision, and candidate selection. Boundary evaluation
uses the three-round successive-halving procedure described in Step~2:
every boundary first receives one PRE/POST continuation pair; after each
of the first two rounds, active boundaries are ranked by the normalized
harm score in Equation~\eqref{eq:halving-score}, and only the top half
receive another pair. Thus, each boundary receives at most three pairs.
Candidate selection uses five prompts and the 12 training tasks with the
most compression events, with one run per prompt--task pair. The
nine-draw diagnostic analysis, sensitivity experiments, and held-out test
evaluation are excluded.

\begin{table}[h]
\centering
\small
\caption{
Offline cost of boundary-level adaptation. Tokens include the visible
inputs and outputs of the downstream agent, compressor, and optimizer.
CF denotes PRE/POST counterfactual continuations allocated by three-round
successive halving. Total includes initial trajectory collection;
}
\label{tab:adaptation-cost}
\begin{tabular}{llrrr}
\toprule
Benchmark & Scope & Boundaries & CF rollouts & Total (M) \\
\midrule
AppWorld
& History-only       & 133 & 464   & 52.04 \\
& Prefix-conditioned & 198 & 694   & 85.40 \\

OfficeBench
& History-only       & 214 & 750   & 45.47 \\
& Prefix-conditioned & 298 & 1,042 & 90.79 \\

$\tau^2$-Bench Retail
& History-only       & 200 & 700   & 44.03 \\
& Prefix-conditioned & 204 & 714   & 39.59 \\
\bottomrule
\end{tabular}
\end{table}

Successive halving reduces the number of counterfactual continuations by
approximately 42\% relative to allocating three PRE and three POST
continuations uniformly to every boundary. Because retained boundaries
tend to require longer continuations, the corresponding reduction in
boundary-evaluation tokens is smaller but still substantial, ranging
from 30.2\% to 44.5\% across settings. Across the six adaptation
settings, the total offline token cost decreases from 511.24M to
357.32M tokens, a reduction of 30.1\%. Prompt revision itself remains
inexpensive, and the resulting total adaptation cost ranges from 39.6M
to 90.8M tokens. This cost is incurred once during offline adaptation;
after prompt selection, \texttt{PAIR} requires no additional model calls
beyond the standard deployment-time compression procedure.

\noindent\textbf{Why adaptive allocation?}
We also examined simpler ways to reduce boundary-evaluation cost.
Position-based truncation can miss later harmful compressions; retaining
only the first outcome-harm boundary can discard evidence supporting
distinct prompt revisions; and random or trajectory-level subsampling
loses coverage because useful boundary evidence is sparse. Successive
halving instead probes every realized compression once and reduces only
the amount of repeated sampling: additional continuation pairs are
concentrated on boundaries showing stronger provisional evidence of harm.
In retrospective nine-draw analyses, this reduces average evaluation from
six to approximately $3.5$ continuations per boundary, with only a small
change in harmful-boundary recall and essentially unchanged rule-support
coverage. Re-running prompt adaptation with the resulting evidence
preserves downstream performance.

\noindent\textbf{Comparison with ACON.}
PAIR incurs higher offline adaptation cost than ACON because it
explicitly evaluates individual compression boundaries rather than
relying only on trajectory-level feedback. Under the same
candidate-selection protocol, three-round successive halving requires
39.6--90.8M tokens across the six adaptation settings, compared with
11.8--27.9M for ACON-UT and 7.3--19.6M for ACON-UTCO.
Successive halving nevertheless substantially reduces the additional
attribution cost relative to uniform three-pair evaluation. Importantly,
this overhead is incurred only once during offline prompt adaptation;
after the prompt is selected, PAIR introduces no additional model calls
at deployment time.

\subsection{Cross-Agent Transfer}
\label{app:cross-agent-transfer}

We evaluate whether prompts adapted with GPT-5.6 Luna transfer to a
different model without further optimization. We apply the learned
prompts unchanged on AppWorld and OfficeBench, using MiniMax-M3 as both
the downstream agent and compressor.

\begin{figure*}[h]
    \centering
    \includegraphics[width=0.86\textwidth]
    {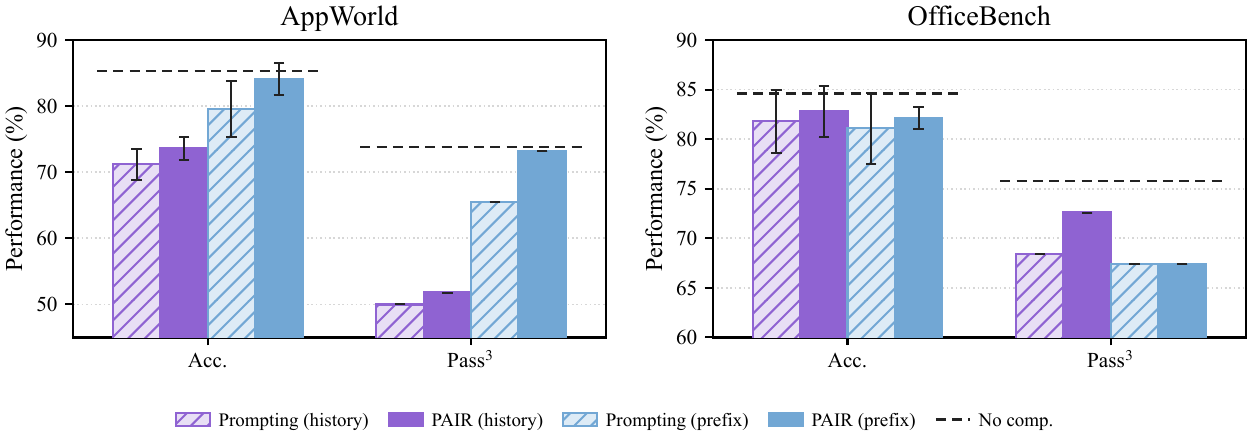}
    \caption{
        \textbf{Zero-shot transfer to MiniMax-M3.}
        Accuracy and Pass$^3$ over three runs after transferring prompts
        adapted with GPT-5.6 Luna. Both the agent and compressor use
        MiniMax-M3. Dashed lines denote no compression; error bars show
        the across-run standard deviation of accuracy.
    }
    \label{fig:cross-model-transfer}
    \vspace{-3mm}
\end{figure*}

As shown in Figure~\ref{fig:cross-model-transfer}, \texttt{PAIR}
improves upon its starting prompt after both the agent and compressor
are replaced. On AppWorld, history-only adaptation raises accuracy from
$71.2\%$ to $73.6\%$ and Pass$^3$ from $50.0\%$ to $51.8\%$.
Prefix-conditioned adaptation yields larger gains, increasing accuracy
from $79.6\%$ to $84.1\%$ and Pass$^3$ from $65.5\%$ to $73.2\%$,
approaching the no-compression results of $85.3\%$ and $73.8\%$.

The same prompts also transfer to OfficeBench. History-only
\texttt{PAIR} improves accuracy from $81.8\%$ to $82.8\%$ and Pass$^3$
from $68.4\%$ to $72.6\%$. Under prefix-conditioned compression,
accuracy increases from $81.1\%$ to $82.1\%$, while Pass$^3$ remains
$67.4\%$. Thus, the task-specific compression guidance learned with
GPT-5.6 Luna remains useful under a different agent--compressor pair,
although the magnitude of transfer depends on the benchmark and
compression scope.

\subsection{Paired Significance Tests Against ACON}
\label{app:paired}

Pass$^3$ is computed over three independent runs and changes by one
task whenever a single run flips, so aggregate differences of a few
points can lie within run-to-run noise. To assess whether \texttt{PAIR}'s
reliability gains are systematic, we report task-level paired statistics
against the best ACON variant in each of the six benchmark--scope
settings, where the best variant is the one with the higher Pass$^3$ in
Tables~\ref{tab:appworld_gpt_luna} and~\ref{tab:qa_officebench_main}.

\paragraph{Protocol.}
For each setting, let $S^{\mathrm{PAIR}}_{i,r}, S^{\mathrm{ACON}}_{i,r}
\in \{0,1\}$ denote success of task $i$ in run $r \in \{1,2,3\}$.
Because both methods are evaluated on the same task set, tasks serve as
the unit of pairing. We compute (i) paired bootstrap 95\% percentile
intervals for $\Delta\mathrm{Pass}^3$ and $\Delta\mathrm{Acc}$ by
resampling tasks with replacement 10{,}000 times and recomputing both
metrics for both methods on each resample, and (ii) an exact two-sided
sign test on the Pass$^3$ indicator, where a task counts as a
\emph{win} if \texttt{PAIR} solves it in all three runs and ACON does
not, and as a \emph{loss} in the reverse case; tasks on which both
methods agree are uninformative and excluded. Table~\ref{tab:paired}
reports the results.

\begin{table}[h!]
\centering
\caption{Task-level paired comparison of \texttt{PAIR} against the best
ACON variant in each setting. $\Delta$ is \texttt{PAIR} minus ACON in
percentage points with paired bootstrap 95\% intervals; win/lose counts
discordant tasks on the Pass$^3$ indicator. For each benchmark--scope
setting, $p$ is the two-sided exact sign test; the pooled win/lose count
is descriptive. Bold marks intervals that exclude zero.}
\label{tab:paired}
\resizebox{\textwidth}{!}{%
\begin{tabular}{llrrrrr}
\toprule
Benchmark & Scope & vs. & $n$ & $\Delta$Pass$^3$ [95\% CI] & $\Delta$Acc [95\% CI] & win/lose ($p$) \\
\midrule
AppWorld    & history & ACON-UT   & 168 & $\mathbf{+8.9\ [+3.0,\,+15.5]}$ & $\mathbf{+3.6\ [+0.8,\,+6.3]}$ & 22/7 (0.008) \\
AppWorld    & prefix  & ACON-UT   & 168 & $\mathbf{+7.8\ [+1.8,\,+10.7]}$ & $\mathbf{+3.2\ [+0.4,\,+6.2]}$ & 21/8 (0.024) \\
OfficeBench & history & ACON-UT   &  95 & $+2.1\ [-5.3,\,+9.4]$ & $+3.5\ [-0.4,\,+7.4]$ & 7/5 (0.774) \\
OfficeBench & prefix  & ACON-UTCO &  95 & $+7.4\ [-1.1,\,+16.8]$ & $+2.5\ [-1.8,\,+7.0]$ & 13/6 (0.167) \\
$\tau^2$-Retail & history & ACON-UTCO & 40 & $+2.5\ [-15.0,\,+20.0]$ & $+0.0\ [-10.0,\,+9.2]$ & 7/6 (1.000) \\
$\tau^2$-Retail & prefix  & ACON-UT   & 40 & $+7.5\ [-5.0,\,+20.0]$ & $+4.2\ [-5.8,\,+15.0]$ & 5/2 (0.453) \\
\midrule
Pooled & & & 606 & & & 75/34 \\
\bottomrule
\end{tabular}%
}
\end{table}

\paragraph{Results.}
All six point estimates of $\Delta\mathrm{Pass}^3$ are positive, and
\texttt{PAIR} wins more discordant tasks than it loses in every
benchmark--scope setting. Taken individually, the Pass$^3$ advantage is
significant at the 95\% level in both AppWorld settings. The remaining
intervals include zero, reflecting the limited resolution of Pass$^3$
over three runs on 40--95 tasks rather than a reversal in direction:
on $\tau^2$-Bench Retail, for example, a single task corresponds to
2.5 percentage points. Pooling the task--scope comparisons
descriptively yields 75 wins against 34 losses. Because the two
compression scopes reuse the same underlying tasks, we do not treat
this pooled count as an independent-sample significance test. Overall,
the paired results show a consistent directional advantage across
benchmarks and compression scopes rather than an effect driven by a
single favorable setting. We therefore report the per-setting effect
sizes together with their uncertainty intervals and paired tests.

\subsection{Boundary-Level Case Studies}
\label{app:case-study}

We examine how boundary-level evidence is translated into compression
prompt revisions on AppWorld. We focus on prefix-conditioned
compression, for which the complete chain from the initial prompt to
the selected \texttt{PAIR} prompt is available. At each studied
boundary, we restore the same environment state and recompress the same
consumed history using the initial prompt $P_0$, the adapted prompt
$P_{\mathrm{PAIR}}$, and, for the outcome cases, the matched
\texttt{ACON-UT} prompt $P_{\mathrm{ACON}}$. Each condition uses three
independent compression samples and continuations, while the agent,
tools, decoding configuration, and environment state remain fixed.
These targeted replays are conducted only for analysis and do not
affect prompt selection.

We consider a case closed when the original defect recurs after
recompression with $P_0$, disappears with $P_{\mathrm{PAIR}}$, and the
adapted summaries exhibit the intended prompt change. The control is
necessary because the boundary effect concerns a realized summary: a
harmful summary may be sampled from a prompt without the same defect
recurring in every subsequent sample. Table~\ref{tab:case-study-summary}
summarizes the two primary cases.

% Preamble:
% \usepackage{tabularx}
% \usepackage{array}

\begin{table*}[h!]
\centering
\small
\caption{
\textbf{Boundary-level case studies on AppWorld.}
The control recompresses the same boundary using the initial prompt,
whereas the adapted condition uses the selected \texttt{PAIR} prompt.
Success counts are measured over three continuations. PRE denotes the
pre-compression context.
}
\label{tab:case-study-summary}

{
\setlength{\tabcolsep}{4pt}
\renewcommand{\arraystretch}{1.08}
\begin{tabularx}{\linewidth}{
    @{}
    >{\raggedright\arraybackslash}p{0.11\linewidth}
    >{\raggedright\arraybackslash}p{0.15\linewidth}
    >{\raggedright\arraybackslash}X
    >{\raggedright\arraybackslash}X
    >{\raggedright\arraybackslash}p{0.16\linewidth}
    @{}
}
\toprule
Case & Boundary effect & Compression error & Prompt revision &
Control $\rightarrow$ \texttt{PAIR} \\
\midrule

Outcome hazard &
PRE $3/3$, POST $1/3$;
$\widehat H_t=0.67$ &
The summary drops the coworker restriction and presents the total
over all received payments as the answer. &
Preserve task predicates and relationship filters; distinguish
observations from unverified inferences. &
$0/3 \rightarrow 3/3$ success \\

\addlinespace[2pt]

Interaction burden &
PRE and POST both $3/3$;
$\widehat B_t=5.0$ &
Previously read API specifications are reduced to prose, causing
documentation and authentication to be repeated. &
Preserve executable API signatures and completed state; do not reopen
completed documentation or authentication. &
$14.7 \rightarrow 10.7$ mean steps \\

\bottomrule
\end{tabularx}
}
\end{table*}

\noindent\textbf{Case A: Preserving a task-defining predicate.}
The task asks for the amount received from the user's coworkers on
Venmo after a specified date. Before compression, the agent has
retrieved 36 received transactions totaling 1,676, but has not yet
determined which senders are coworkers. The summary produced by
$P_0$ removes this unresolved restriction and treats the unrestricted
total as the final answer:

\begin{promptbox}{Harmful summary produced by the initial prompt}
- [x] Retrieved all received Venmo transactions from
  2023-02-01 onward across all pages: 36 transactions.

- Filter by direction='received' and
  min_created_at='2023-02-01': Matches the request for money
  received since February 1.

- Use total 1676 as the answer: Sum of the 36 returned
  transaction amounts.
\end{promptbox}

The three PRE continuations query the phone contacts for the coworker
relationship, retain only matching Venmo senders, and submit the
correct answer, 786. Two of the three original POST continuations
instead submit 1,676 immediately, while the remaining continuation
recovers the missing predicate through further interaction. Therefore,
$\widehat V_t(\mathrm{PRE})=1$,
$\widehat V_t(\mathrm{POST})=1/3$, and
$\widehat H_t=0.67$. The optimizer identifies the dropped relationship
constraint as the primary cause of the outcome difference.

The resulting revision changes how the compressor records the task
objective and distinguishes established facts from unresolved
inferences:

\begin{promptbox}{Relevant PAIR prompt revision}
Preserve every task predicate, scope boundary, relationship filter,
temporal condition, required mutation, and completion condition
exactly. Never replace a restricted set with a broader set such as all
received records, all pending records, or all search results.

Record only decisions supported by the task, operating rules,
documented API behavior, or observed history. Label inferences as
tentative.
\end{promptbox}

When the same boundary is recompressed with $P_0$, all three new
summaries again direct the agent to answer 1,676, and all three
continuations fail. In contrast, $P_{\mathrm{PAIR}}$ preserves both the
observed total and its unresolved relationship to the requested
coworker-only quantity:

\begin{promptbox}{Summary produced by the adapted PAIR prompt}
- [x] The sum of all 36 received transactions is 1676.
  This is not yet confirmed to be the coworker-only total.

- [ ] Determine which Venmo senders are Paul's coworkers using the
  phone contacts relationship data.

- Coworker scope: Tentative; treating all 36 received transactions
  as coworker payments is not explicitly verified.
\end{promptbox}

All three adapted continuations first query the contact relationship
and then submit 786, yielding $3/3$ success. The fix requires more
steps than the incorrect immediate submission, but only because the
control terminates early with the wrong answer. Relative to PRE, the
adapted continuations add approximately three steps while restoring
full success. At the same boundary, \texttt{ACON-UT} succeeds in two
of three draws: two summaries recover the coworker restriction, while
one again instructs the agent to use 1,676.

A second boundary exhibits the same mechanism. For a task restricted
to requests from coworkers and friends, the original summary broadens
the target to every pending request. The original POST continuations
fail $3/3$. Under recompression, $P_0$ succeeds only $1/3$, whereas
$P_{\mathrm{PAIR}}$ explicitly marks the relationship classification
as unresolved and succeeds $3/3$. These cases show that the revision
addresses a recurring loss of task-defining predicates rather than a
single surface form.

\noindent\textbf{Case B: Preserving usable execution state.}
The second case separates outcome preservation from interaction
burden. The task requires reading an internet-bill receipt, identifying
the user's roommates, and sending equal Venmo requests. Before
compression, the agent has already read the login and contact-search
specifications. All PRE and POST continuations eventually succeed, but
the POST continuations require five additional steps on average:
$\widehat B_t=14.0-9.0=5.0$.

The initial summary records that the specifications were read but
reduces them to generic prose and distributes authentication across
several future steps:

\begin{promptbox}{Burden-inducing summary produced by the initial prompt}
- [x] Retrieved specifications for file_system.login.

- [x] Retrieved specifications for phone.login.

2. Get the file-system account credentials and call
   apis.file_system.login.

5. Log into phone using the user's phone number and
   supervisor-provided password.

7. Retrieve the Venmo account credentials and log in if required.

File-system login requires a username and password and returns an
access token. Phone login requires the account phone number and a
password.
\end{promptbox}

Each PRE continuation obtains the required credentials, authenticates
all three applications in one interaction step, and reads only the
previously unseen Venmo request documentation. The POST continuations
instead reopen the file-system and phone login documentation,
authenticate the applications separately, and perform four to five
documentation calls. This divergence does not change task success, but
it consumes execution budget that would otherwise remain available for
recovery.

The initial boundary diagnosis was only partially correct. Its
top-ranked hypothesis concerned an added post-request verification
step, but no POST continuation performed such verification. The traces
instead show repeated documentation and fragmented authentication.
When evidence from this boundary is aggregated with related burden
boundaries, the selected prompt introduces the following
compressor-facing guidance:

\begin{promptbox}{Relevant PAIR prompt revision}
Preserve actionable completed state, including exact usable tokens,
credentials, established variables, API signatures, and already-read
documentation. Do not reduce usable state to a generic statement that
it exists.

Give the smallest ordered continuation justified by the established
state. Do not reopen completed documentation, authentication, target
discovery, or data retrieval.
\end{promptbox}

Under the same-prompt replay, the burden recurs: the three
continuations take 17, 14, and 13 steps, for a mean of 14.7. The
adapted summaries instead retain executable signatures and consolidate
authentication:

\begin{promptbox}{Execution state preserved by the adapted prompt}
apis.file_system.login(
username=<email>, password=<password>)

apis.phone.login(
username=<phone_number>, password=<password>)

apis.phone.search_contacts(
access_token=<token>, query="", relationship=None,
page_index=0, page_limit=5)

1. Retrieve the supervisor-provided passwords and use the relevant
   credentials to log in to file_system, phone, and venmo.
\end{promptbox}

All three adapted continuations authenticate the three applications in
one step and read only the previously unseen Venmo documentation. Mean
continuation length falls from 14.7 to 10.7 steps while success remains
$3/3$. The adapted execution does not fully recover the PRE mean of
9.0 steps, but removes most of the reproducible burden introduced by
the initial prompt.

\noindent\textbf{Why boundary-level evidence yields a different
adaptation from ACON.}
The advantage in these cases is not that \texttt{PAIR} alone can
recognize the surface error. Under the controlled comparison,
\texttt{ACON-UT} and \texttt{PAIR} start from the same OpenClaw prompt
and analyze the same seven failed training tasks. ACON's trajectory
analyses also identify the missing coworker filter and omitted
transaction identifiers. The key difference lies in how the evidence
is attributed and converted into a compression policy.

ACON compares a successful full-context trajectory with a failed
compressed trajectory. Its analysis template requests broad
remediation strategies, including guardrails, verification, caching,
early-exit heuristics, and loop detection. Because the terminal
contrast contains all actions and compressions preceding the outcome,
a detected compression error can be translated into instructions for
how the downstream agent should execute:

\begin{promptbox}{Representative agent-facing guidance introduced by ACON}
Before any irreversible action, require one compact guardrail check
that the target set, scope, and identifiers are complete.

After mutations, require verification that the intended state change
has occurred.

Maintain a State Table and a separate set of Compression Rules in the
summary.
\end{promptbox}

Such guidance may prevent some errors, but it couples the compression
prompt with a new execution policy. Verification, blocking, and
recovery behavior are delegated to the downstream agent even when the
underlying compression failure is simply that a predicate or usable
state was not preserved.

In contrast, the paired continuations in \texttt{PAIR} hold the
environment state and downstream agent fixed at one boundary. The
observed difference is therefore localized to one context replacement.
Outcome hazard identifies information whose loss changes task
completion, while interaction burden identifies information whose loss
induces recovery without changing the terminal result. The optimizer
is then restricted to revising what the compressor records:

\begin{promptbox}{Compressor-facing guidance introduced by PAIR}
Preserve every task predicate, scope boundary, relationship filter,
temporal condition, required mutation, and completion condition
exactly.

Preserve actionable completed state, including exact usable tokens,
credentials, established variables, API signatures, and already-read
documentation.

\end{promptbox}

This boundary-level construction provides two forms of decoupling.
First, it separates compression-induced changes from errors already
present in the trajectory: PRE and POST begin from the same environment
state and differ only in the supplied context. Second, it separates
the compressor's responsibility from the agent's policy. The resulting
rules specify which state must survive compression rather than adding
new verification procedures, blockers, or action-selection heuristics
to the summary.

The downstream traces are consistent with this distinction. On the
AppWorld test-normal split, ACON summaries preserve explicit access
tokens in none of the observed summaries, compared with $69\%$ for
\texttt{PAIR}; executable API calls with keyword arguments appear in
$48\%$ and $80\%$, respectively. After a compression, an API
documentation lookup is the first action in $38\%$ of ACON
continuations and $22\%$ of \texttt{PAIR} continuations. ACON also
requires 6.05 steps on average to reach the next mutation or
completion, compared with 4.74 for \texttt{PAIR}, and incurs 7.9
versus 5.9 documentation calls per task. These measurements are
behavioral traces consistent with the prompt distinction; they do not
attribute every held-out action to an individual prompt line.

End-to-end results exhibit the same pattern. Relative to the matched
\texttt{ACON-UT} prompt, prefix-conditioned \texttt{PAIR} improves mean
success by $3.2\%$, with a paired $95\%$ interval of
$[0.4\%,6.0\%]$, reduces mean execution length by 2.15 steps, with an
interval of $[-2.88,-1.45]$, and reduces total token use by $12.6\%$,
with an interval of $[-17.2\%,-7.7\%]$. The Pass$^3$ difference is
positive but less certain: $+3.6\%$ with an interval of
$[-1.8\%,8.9\%]$. The evidence therefore supports a clearer conclusion
about average success and execution efficiency than about the
magnitude of the reliability difference in this individual
comparison.

\noindent\textbf{Negative and non-reproducible cases.}
We replay all seven outcome boundaries cited by the selected error
rules and all six rule-cited burden boundaries with equal PRE and POST
success, rather than selecting examples only after replay. Three
realized outcome defects do not recur under fresh compression samples
from the initial prompt. Their original summaries are harmful, but
the defects are not stable across subsequent samples from the same
compression policy. One cited outcome boundary remains unsuccessful
under $P_{\mathrm{PAIR}}$, and two documentation-related burden
boundaries are not shortened by the adapted prompt. Overall, three
outcome boundaries and one burden boundary satisfy the complete replay
criterion. Cases A and B are selected because they provide the
clearest alignment between the observed divergence, the resulting
prompt rule, and replayed behavior.

These cases illustrate both the value and the limit of boundary-level
adaptation. A localized counterfactual effect provides cleaner evidence
than a terminal trajectory contrast, but a finite number of stochastic
continuations cannot guarantee that every diagnosis generalizes at the
prompt level. End-to-end selection therefore remains necessary to
determine whether the aggregated revisions improve the global
compression policy.

\section{Boundary-Verification Algorithm}
Algorithm~\ref{alg:boundary-halving} gives the implementation of the
successive-halving procedure used in Step~2. Every realized compression
boundary is evaluated with an initial PRE/POST continuation pair.
Additional pairs are allocated only to the higher-scoring half of the
active boundaries according to the normalized harm score in
Equation~\eqref{eq:halving-score}. After three rounds, surviving
boundaries are retained if their estimated outcome hazard or interaction
burden exceeds the corresponding selection threshold.
\begin{algorithm}[h]
\caption{Successive-Halving Boundary Verification}
\label{alg:boundary-halving}
\begin{algorithmic}[1]
\Require Realized boundaries $\mathcal A$; thresholds $\tau_H,\tau_B$
\State $\mathcal A^{(1)} \gets \mathcal A$

\For{$r=1,2,3$}
    \For{each $t\in\mathcal A^{(r)}$}
        \State Run one additional PRE/POST continuation pair
        \State Update $\widehat H_t^{(r)}$ and $\widehat B_t^{(r)}$
    \EndFor

\If{$r<3$}
    \State Compute
    $s_t^{(r)}
    \gets
    \max\!\left\{
    \widehat H_t^{(r)}/\tau_H,\,
    \widehat B_t^{(r)}/\tau_B
    \right\}$
    for all $t\in\mathcal A^{(r)}$
    \State $\mathcal A^{(r+1)}
    \gets \operatorname{TopHalf}
    \bigl(\mathcal A^{(r)};s^{(r)}\bigr)$
\EndIf
\EndFor

\State $\mathcal V \gets
\left\{
t\in\mathcal A^{(3)}:
\widehat H_t^{(3)}\geq\tau_H
\ \lor\
\widehat B_t^{(3)}\geq\tau_B
\right\}$
\State \Return $\mathcal V$ and the associated paired traces
\end{algorithmic}
\end{algorithm}

\newpage
\section{Details of Baselines}\label{app:baseline_details}

This section describes the compression baselines used in our experiments.
Unless otherwise specified, all conditions are evaluated under the same
frozen-agent protocol: the downstream agent model, tool-use prompt, decoding
configuration, tool APIs, output parser, and execution environment are fixed.
Only the context supplied to the agent is changed. This protocol isolates the
effect of context representation from changes in the downstream agent policy.

\noindent\textbf{Shared Evaluation Protocol.}
All compression and truncation baselines use the same recurrent-compression
trigger and preserve the most recent interaction turn verbatim. The
full-context reference bypasses compression and retains the complete interaction
history. Across all conditions, we keep the downstream tool-use prompt, tool
descriptions, output-format instructions, few-shot examples, decoding
configuration, parser, and execution environment fixed. Replacement contexts
are inserted at the same continuation point, so differences in downstream
behavior arise from the supplied context representation rather than from a
changed agent policy or interface.

Because prompt-defined baselines can be sensitive to small wording changes, we
freeze all prompt templates before evaluation and record hashes of the rendered
prompts used in each run. For ACON, the source of truth is the original
Microsoft repository and commit specified below. For LLMLingua-2, the source
of truth is the official Microsoft implementation and released compression
model.

\subsection{Full-Context Reference}
\label{app:baseline-full-context}

\noindent\textbf{No compression.}
The agent receives the full uncompressed interaction history. This condition
serves as the full-context reference for behavior preservation and as the
reference point for token-cost measurements. It is not a compressor and does
not separately consume the task instruction, which is already included in the
agent context.

\subsection{Token Pruning and Truncation Baselines}
\label{app:baseline-llmlingua2}

This group contains two non-generative baselines. Rather than producing a new
free-form summary, they retain selected portions of the original interaction
history. Both use the same compression trigger, downstream context slot, and
recent-turn preservation policy as the generative baselines. compression is
triggered when the compressible history exceeds the context budget, while the
most recent interaction turn remains verbatim.

\noindent\textbf{FIFO.}
FIFO is a recency-based sliding-window control. When the rendered compressible
history exceeds the context budget, complete turns are discarded from the
front, oldest first, until the history fits. The system prompt, original task
instruction, and most recent interaction turn are always retained. FIFO
therefore isolates how much behavior can be preserved through recent
action--observation continuity alone, without learned salience estimation or
generated summary text.

\noindent\textbf{LLMLingua-2.}
LLMLingua-2 formulates prompt compression as token classification and distills
a smaller compressor for efficient and faithful extractive compression
\citep{pan2024llmlingua}. We apply it task-agnostically to the compressible
interaction history using the released
\texttt{microsoft/llmlingua-2-xlm-roberta-large-meetingbank} model. When
compression is triggered, LLMLingua-2 selects tokens from the existing history
up to the target budget. The resulting extractive context replaces the older
turns, while the most recent turn remains verbatim. This baseline tests whether
token-level salience alone preserves the execution state required for future
agent actions.

\noindent\textbf{Implementation.}
We use the official Microsoft LLMLingua implementation.\footnote{
\url{https://github.com/microsoft/LLMLingua}, version \texttt{0.2.2}
(release tag \texttt{v0.2.2}, commit
\texttt{a411a3fa61df74411157b2512b592d5357bd8f17}).}
For LLMLingua-2, the target token count is set to the budget allocated to the
compressible history, excluding the most recent turn that is retained
verbatim. Its compressed output is inserted into the same downstream context
slot used by the other compression baselines.

\subsection{Structured-Summary Compression Baselines}
\label{app:baseline-structured-summaries}

Our two structured-summary baselines are adapted from compression modules in
open-source agent frameworks. We preserve their original summary schemas and
prompt text while integrating them into the same recurrent-compression harness.
Unlike the token-dropping LLMLingua-2 baseline and the recency-based FIFO
control, both invoke an auxiliary LLM to rewrite the compressible history into
a structured Markdown checkpoint after the context exceeds the token budget.
They preserve the most recent interaction turn and support iterative updates
that fold new turns into the previous checkpoint.

\paragraph{Prompting (OpenClaw compaction).}
The OpenClaw baseline\footnote{Adapted from the OpenClaw agent-core harness:
\url{https://github.com/openclaw/openclaw/blob/0e7b5c34292cc28707a0e5a0b730cff295ef0f8a/packages/agent-core/src/harness/compaction/compaction.ts}
(commit \texttt{0e7b5c34292cc28707a0e5a0b730cff295ef0f8a}).}
maintains an approximate recent-token budget, cuts the compressible history at
a turn boundary, and summarizes the remainder with the prompts below. The
first compression uses the checkpoint prompt. Subsequent compressions use the
update prompt, which folds new turns into the previous summary.

\begin{promptbox}{OpenClaw summarization system prompt}
You are a context summarization assistant. Your task is to read a conversation between a user and an AI coding assistant, then produce a structured summary following the exact format specified.

Do NOT continue the conversation. Do NOT respond to any questions in the conversation. ONLY output the structured summary.
\end{promptbox}

\begin{promptbox}{OpenClaw first-compaction prompt}
The messages above are a conversation to summarize. Create a structured context checkpoint summary that another LLM will use to continue the work.

Use this EXACT format:

## Goal
[What is the user trying to accomplish? Can be multiple items if the session covers different tasks.]

## Constraints & Preferences
- [Any constraints, preferences, or requirements mentioned by user]
- [Or "(none)" if none were mentioned]

## Progress
### Done
- [x] [Completed tasks/changes]

### In Progress
- [ ] [Current work]

### Blocked
- [Issues preventing progress, if any]

## Key Decisions
- **[Decision]**: [Brief rationale]

## Next Steps
1. [Ordered list of what should happen next]

## Critical Context
- [Any data, examples, or references needed to continue]
- [Or "(none)" if not applicable]

Keep each section concise. Preserve exact file paths, function names, and error messages.
\end{promptbox}

\begin{promptbox}{OpenClaw iterative-update prompt}
The messages above are NEW conversation messages to incorporate into the existing summary provided in <previous-summary> tags.

Update the existing structured summary with new information. RULES:
- PRESERVE all existing information from the previous summary
- ADD new progress, decisions, and context from the new messages
- UPDATE the Progress section: move items from "In Progress" to "Done" when completed
- UPDATE "Next Steps" based on what was accomplished
- PRESERVE exact file paths, function names, and error messages
- If something is no longer relevant, you may remove it

Use this EXACT format:

## Goal
[What is the user trying to accomplish? Can be multiple items if the session covers different tasks.]

## Constraints & Preferences
- [Any constraints, preferences, or requirements mentioned by user]
- [Or "(none)" if none were mentioned]

## Progress
### Done
- [x] [Completed tasks/changes]

### In Progress
- [ ] [Current work]

### Blocked
- [Issues preventing progress, if any]

## Key Decisions
- **[Decision]**: [Brief rationale]

## Next Steps
1. [Ordered list of what should happen next]

## Critical Context
- [Any data, examples, or references needed to continue]
- [Or "(none)" if not applicable]

Keep each section concise. Preserve exact file paths, function names, and error messages.
\end{promptbox}

\subsection{ACON Prompt Baselines}
\label{app:baseline-acon}

We compare against two prompt-optimization methods from
ACON~\citep{kang2025acon}. Both adapt the natural-language compression
guideline while keeping the downstream agent and compressor model fixed.

\noindent\textbf{\textsc{ACON-UT}.}
The utility-maximization stage identifies tasks that succeed with full
context but fail under compression. An optimizer LLM compares the
corresponding trajectories, infers which task-relevant information was
lost or distorted, and revises the compression guideline to improve
task success.

\noindent\textbf{\textsc{ACON-UTCO}.}
This method applies compression maximization after utility maximization.
The optimizer analyzes successful compressed trajectories to identify
redundant content and further revises the guideline to produce shorter
contexts while preserving task utility.

\noindent\textbf{Implementation.}
We use the official prompt-optimization implementation from the
Microsoft ACON repository.\footnote{
\url{https://github.com/microsoft/acon}. We use commit
\texttt{d63f9ae18959dc7215ff62899c94c5e8c56847ae}.}
Both methods start from the same OpenClaw-style compression prompt used
by our \textsc{Prompting} baseline. \textsc{ACON-UT} applies only
utility maximization, whereas \textsc{ACON-UTCO} subsequently applies
compression maximization. All other agent, compressor, and execution
settings remain fixed.

\section{Prompts and Qualitative Analysis}
ACON-UT and \texttt{PAIR} start from the same OpenClaw compression
prompt and use the same AppWorld adaptation trajectories. They differ,
however, in both the evidence used for adaptation and the space over
which the prompt may be revised. ACON contrasts successful
full-context and failed compressed trajectories and may freely rewrite
the complete compression framework. \texttt{PAIR} uses localized
counterfactual evidence at harmful compression boundaries while keeping
the OpenClaw section structure fixed; it adapts only the instructions
within that deployment-compatible template. We present the resulting
prompts in full before analyzing these differences.

\begin{promptbox}{History-only {ACON-UT} Prompt Optimized on AppWorld}
<conversation>
{{ history }}
</conversation>

<previous-summary>
{{ prev_summary }}
</previous-summary>

[INFORMATION SOURCE]
The messages in <conversation> are new execution evidence. The content in <previous-summary> is prior persisted state. Reconcile both into one compact, factually continuous summary.

Rules for updating:
- Treat observed tool calls and successful responses as authoritative over plans, prose checklists, or inferred status.
- Preserve completed actions as completed; never label a successful call as blocked, unknown, or pending.
- Distinguish every action as exactly one of: Done, In Progress, Blocked, or Failed. A task is not pending if its API call and successful result are present.
- Preserve immutable outputs, exact identifiers, dates, amounts, names, emails, file paths, endpoint/function names, parameters, error messages, accepted answers, completion calls, and completion confirmations when they affect continuation.
- Preserve deterministic derived results only with their source or derivation: eligibility criteria, target ID sets, exclusions, totals, mappings, and verification evidence.
- Do not infer facts from missing variables, session boundaries, stale summaries, external dates, or intended plans.
- If prior prose conflicts with concrete tool output, replace the prose with the tool-supported fact and note the correction only when useful.
- Python/runtime variables do not survive a session reset. Never assume a prior-session variable exists unless it is listed below and recreated in the new session.
- For authentication, record app, username/account identifier, authentication status, canonical token variable or safe token handle, token-validity assumption, and the exact re-login/bootstrap instruction needed if the token cannot persist. Do not print raw passwords or raw tokens.
- Preserve all state required to continue without rediscovery: IDs, page/cursor positions, page size, retrieved records or compact exact lists, mutation results, request/transaction IDs, contact/email mappings, and verification status.
- Record pagination completeness: starting page/cursor, page size, pages processed, termination condition, and whether the resulting set is complete. An empty page proves completion only when the required initial page and preceding pages were queried.
- Preserve endpoint contracts already established, including required parameters and constraints. Do not recommend repeated documentation, credential lookup, variable probing, or data retrieval unless the summary shows uncertainty or an observed failure.
- Preserve exact target sets and exclusion sets before mutations. Do not broaden, recompute, or mutate a set without reconciling it against the persisted evidence.
- Record postconditions and verification results for every externally visible or destructive action. Do not claim completion until the required result set has been processed and verified.
- If a valid answer or completion result already exists, preserve it as terminal and instruct the next session to stop exploration and retain or submit that result.
- Use early-exit conditions when supported evidence is sufficient, such as exhausted pagination, an empty subsequent page, an out-of-range feed page, an unsupported artifact format, or a validated fallback answer.
- Keep the summary compact: prefer tables, exact lists, and short bullets; omit conversational narrative, duplicated facts, speculative alternatives, and API output that is not needed for continuation.
[/INFORMATION SOURCE]

Create the updated summary using exactly this structure:

## Goal
- State the current task and any explicit scope or acceptance criteria.
- Preserve the original goal unless the new evidence definitively changed it.

## Execution Ledger
### Done
- [x] Completed action --- include the decisive tool result or output when needed for continuity.

### In Progress
- [ ] Only actions not yet completed and currently actionable.

### Blocked or Failed
- Record only evidence-based blockers or failures, including the exact error and the condition required to recover.
- Do not place unavailable-but-optional inputs here if a validated answer or fallback is sufficient.

### Terminal Completion
- `status`: `not_completed` or `completed`
- `accepted_answer`: [value, if any]
- `completion_call`: [exact function and arguments, if observed]
- `completion_confirmation`: [exact confirmation, if observed]
- `next_action`: [hard stop if completed; otherwise the single highest-priority action]

## State Table
| Category | Canonical name | Value or status | Evidence / continuation instruction |
|---|---|---|---|
| Authentication | [app and token variable/handle] | [authenticated, absent, expired, or unknown] | [safe re-login/bootstrap instruction and validity assumption] |
| Entities and mappings | [IDs, names, emails, relationships] | [compact exact values] | [source or derivation] |
| Collections and pagination | [collection/endpoint] | [IDs or records, page/cursor, page size] | [completeness and termination condition] |
| Eligibility and exclusions | [criterion] | [exact eligible set and excluded near-misses] | [deterministic rule/evidence] |
| Mutations | [action and target IDs] | [processed, succeeded, failed, or not attempted] | [per-item or aggregate result] |
| Verification | [postcondition] | [verified, failed, or not run] | [query/result and coverage] |
| Cached API knowledge | [app/endpoint] | [required parameters and constraints] | [reuse unless schema uncertainty causes failure] |
| Other critical state | [canonical variable] | [value/status] | [reconstruction instruction] |

Include only applicable rows, but do not omit any state required to resume safely.

## Key Decisions
- **[Decision]**: [brief evidence-based rationale]
- Preserve prior decisions that remain valid; revise contradictory decisions rather than duplicating them.

## Next Steps
1. [The single highest-priority actionable step, or the hard stop if terminal.]
2. [Only additional necessary steps, ordered by dependency.]
- Do not list documentation or rediscovery steps when the state table already supplies the required information.

## Continuation Guards
- Do not reference prior-session variables unless they appear in the State Table and are recreated or re-authenticated in the new session.
- Before any mutation, reconcile the current target set, exclusions, authentication, required parameters, and idempotency status.
- Before completion, verify that every qualifying target was processed and that the verification set equals the expected set.
- If later evidence changes a contact set, transaction set, pagination result, date, eligibility result, or completion status, invalidate the affected conclusion and recompute it from preserved source records.
- Execute the highest-priority Next Step before exploratory calls.
- Keep the entire summary concise while retaining all state needed for factual continuity.
\end{promptbox}

\begin{promptbox}{History-only \texttt{PAIR} Prompt Optimized on AppWorld}
<conversation>
{{ history }}
</conversation>

<previous-summary>
{{ prev_summary }}
</previous-summary>

The messages above are NEW conversation messages to incorporate into the existing summary provided in <previous-summary> tags.

Update the existing structured summary with new information.

RULES:
- PRESERVE all still-valid existing information from the previous summary, especially the exact user goal, scope, constraints, settled results, completed pages, identifiers, filters, and actionable continuation state.
- ADD only facts supported by the new messages. Do not invent a new task, constraint, result, API, capability, or pending action from documentation or credentials alone.
- UPDATE the Progress section using the newest settled state. Move an item from "In Progress" to "Done" only when the new messages show it completed; never leave a completed lookup or page described as unchecked.
- Preserve the distinction between the user's qualifying subset and broader aggregates, and between final task-specific results and intermediate subtotals, page totals, placeholders, or symbolic expressions.
- Retain exact file paths, function names, identifiers, query parameters, page indexes, required wording, and error messages when they remain relevant.
- Record authentication or reusable session availability when established, without copying secret passwords or token values. Do not make a completed login the next step unless the history shows it failed, expired, or otherwise must be retried.
- Remove or revise information only when the new messages establish that it is obsolete, contradicted, or no longer relevant; do not discard settled pagination, scope, or final-submission facts.
- Keep Next Steps limited to genuinely unresolved work and avoid speculative searches, repeated documentation reads, or re-investigation of settled facts.

Use this EXACT format:

## Goal
[Preserve the user's exact objective and scope. Do not broaden it or replace it with a task inferred from exploratory actions.]

## Constraints & Preferences
- [Preserve existing constraints and add only newly evidenced ones. Keep inclusion/exclusion criteria and required wording explicit.]

## Progress
### Done
- [x] [Include all previously completed items that remain valid and newly completed items. Include every completed page or batch relevant to scope.]

### In Progress
- [ ] [List only genuinely unresolved work after incorporating the new messages. Do not repeat completed actions.]

### Blocked
- [Current evidenced blockers only; preserve exact errors and unavailable data, and remove resolved blockers.]

## Key Decisions
- **[Decision]**: [Preserve all valid prior decisions and add newly evidenced decisions. Keep qualifying results separate from aggregates and intermediate values separate from the settled final result.]

## Next Steps
1. [Update only for genuinely unresolved work. Use the shortest actionable sequence and reuse established APIs, identifiers, filters, page indexes, and available session state.]

## Critical Context
- [Preserve important exact paths, function names, identifiers, query parameters, completed pages, outputs, errors, settled results, and final-submission requirements. Never replace a concrete value with a literal placeholder or unsupported expression, and never expose secret credentials.]
- [Or "(none)" if not applicable]

Keep each section concise while preserving all settled scope and continuation-critical state. Do not invent facts, goals, actions, results, identifiers, APIs, or capabilities. Preserve exact file paths, function names, query parameters, identifiers, and error messages.
\end{promptbox}

\noindent\textbf{Observed interface differences do not explain
planning-oriented adaptation.}
The original ACON-UT and \texttt{PAIR} prompts differ visibly in their
checkpoint interfaces. \texttt{PAIR} retains the OpenClaw sections,
\emph{Goal}, \emph{Constraints \& Preferences}, \emph{Progress},
\emph{Key Decisions}, \emph{Next Steps}, and
\emph{Critical Context}, and revises the instructions governing what
each section should preserve. The original ACON proposer imposes no
structural constraint, and its selected AppWorld prompt replaces this
interface with an \emph{Execution Ledger}, a typed
\emph{State Table}, \emph{Terminal Completion}, and
\emph{Continuation Guards}.

\begin{promptbox}{Planning rules introduced by ACON-UT}
Maintain a Completed/Pending Action Ledger with the status, evidence,
and next action for every operation.

Before any mutation, reconcile the current target set, exclusions,
authentication, required parameters, and idempotency status.

Before completion, verify that every qualifying target was processed
and that the verification set equals the expected set.

Execute the highest-priority Next Step before exploratory calls.
\end{promptbox}

These rules do more than determine which historical facts are retained.
They assign actions, impose preconditions, order execution, and define
completion. However, their planning-oriented character should not be
attributed to the modified schema itself. As shown by the
template-controlled comparison in
Appendix~\ref{app:template-controlled-acon}, ACON-UT exhibits similar
behavior when constrained to the same OpenClaw section structure as
\texttt{PAIR}. The schema determines where these rules are expressed,
but the rules originate from the trajectory-level evidence used for
adaptation.

\noindent\textbf{Planning is necessary, but plan steering is not
equivalent to compression.}
Planning is indispensable in long-horizon tasks. A useful checkpoint
must preserve completed actions, pending objectives, relevant failures,
and the state needed to choose the next action. ACON-UT explicitly
strengthens this role by turning the checkpoint into a carrier of
planning policy. Its State Table records authentication, pagination,
eligible entities, mutations, and verification, while its Continuation
Guards specify how these records should determine future actions.

This design can be effective when the induced workflow matches the
current task. A persistent action ledger can prevent completed work
from being reopened, cached API contracts can avoid repeated
documentation lookup, and explicit mutation and verification guards can
reduce duplicated or unsafe side effects. In this regime, steering the
agent through the checkpoint provides useful planning support.

The limitation is that ACON learns this steering from a
trajectory-level contrast. A successful full-context trajectory
reveals one plan that happened to succeed, while a failed compressed
trajectory reveals one plan that failed. Their difference does not
identify whether the terminal outcome changed because compression
removed necessary state or because the two stochastic executions
followed different plans. The optimizer may therefore encode
procedural patterns from the successful trajectory even when those
patterns are not the information lost during compression.

The instruction to ``Execute the highest-priority Next Step before
exploratory calls,'' for example, can prevent redundant exploration,
but it can also anchor the agent to a locally selected plan. Likewise,
reusing cached endpoint knowledge is efficient when the prior contract
remains applicable, but restrictive when a new task requires a
different tool, parameterization, or search order. These rules arise
because the trajectory-level contrast contains differences in both
retained information and downstream planning. The optimizer may
therefore convert procedural choices from a successful trajectory into
global compression rules, regardless of whether those rules are placed
in a newly created ledger or in the original OpenClaw sections.

\noindent\textbf{Trajectory-level feedback can conflate missing memory
with missing environment state.}
The original ACON-UT prompt provides a concrete example of the
attribution ambiguity introduced by trajectory-level feedback:

\begin{promptbox}{Runtime assumption introduced by ACON-UT}
Python/runtime variables do not survive a session reset. Never assume
a prior-session variable exists unless it is listed below and recreated
in the new session.
\end{promptbox}

This rule conflicts with the AppWorld execution contract:

\begin{promptbox}{AppWorld runtime contract}
You can use the variables from the previous code blocks in the
subsequent code blocks.
\end{promptbox}

A context compression removes earlier interaction text from the model's
visible history, but it does not reset the AppWorld Python runtime.
Variables created in previous code blocks therefore remain available.
Compression may cause the agent to forget a variable's name, meaning,
or relation to the task, but this is an epistemic failure of the
compressed context rather than the destruction of the underlying
runtime state. The appropriate compression correction is to preserve
the relevant variable binding, not to instruct the agent to recreate
the variable itself.

ACON-UT instead treats the observed failure as evidence that runtime
state must be reconstructed. Its State Table requests a reconstruction
instruction for each variable, while its Continuation Guards prohibit
the downstream agent from referencing prior-session variables before
they have been recreated or re-authenticated. Following these rules can
produce unnecessary variable initialization, repeated authentication,
duplicated API calls, or recomputation of results that remain available
in the environment. For state-changing operations, such repetition may
also introduce additional verification and idempotency requirements.

This incorrect rule illustrates the attribution problem of
trajectory-level adaptation. A successful full-context trajectory may
reuse a variable correctly, while a failed compressed trajectory may
lose track of that variable after compression. From the terminal
contrast alone, however, the optimizer cannot determine whether the
compressed context omitted the variable binding, whether the agent
selected a different action, or whether the runtime state itself was
unavailable. ACON-UT resolves this ambiguity by introducing a general
runtime-reconstruction rule, even though the target environment
explicitly preserves variables across code blocks.

Boundary-level counterfactual evidence provides a more appropriate
basis for this diagnosis. By comparing continuations from the same
environment state immediately before and after compression,
\texttt{PAIR} holds the runtime state fixed and isolates the context
replacement. If the post-compression continuation can no longer use a
previously created variable, the contrast identifies missing textual
state, such as the variable name, value, or purpose, rather than a
runtime reset. The corresponding prompt revision can therefore target
what the compressor must preserve without altering the environment
model supplied to the downstream agent.

The mistaken assumption therefore originates from coarse attribution,
not from the choice of checkpoint schema. Unrestricted rewriting makes
the resulting execution policy more visible by allowing it to be
encoded as a dedicated State Table requirement and Continuation Guard,
but the same assumption could also be written into a fixed
\emph{Critical Context} or \emph{Next Steps} section. Template locking
controls the interface through which the rule is expressed; it does
not remove the trajectory-level ambiguity that produced the rule.

\noindent\textbf{\texttt{PAIR} preserves planning state without fixing
a planning policy.}
Keeping the OpenClaw framework fixed does not remove planning from the
checkpoint. \texttt{PAIR} retains both \emph{Key Decisions} and
\emph{Next Steps}, but constrains how these sections are populated.

\begin{promptbox}{State-grounding rules introduced by \texttt{PAIR}}
ADD only facts supported by the new messages. Do not invent a new task,
constraint, result, API, capability, or pending action from
documentation or credentials alone.

Keep Next Steps limited to genuinely unresolved work and avoid
speculative searches, repeated documentation reads, or re-investigation
of settled facts.

Record authentication or reusable session availability when
established, without copying secret passwords or token values. Do not
make a completed login the next step unless the history shows it
failed, expired, or otherwise must be retried.
\end{promptbox}

These rules preserve the current plan as execution state: what has been
decided, what has been completed, what remains unresolved, and which
next actions are justified by the observed trajectory. They do not
supply a fixed authentication, pagination, mutation, or completion
procedure. In particular, session availability is represented according
to observed evidence rather than a universal assumption about whether
runtime state survives compression.

The distinction is therefore between preserving a plan and prescribing
one. \texttt{PAIR} requires the compressor to retain the information
needed for subsequent planning, while leaving the frozen downstream
agent responsible for revising that plan as new observations arrive.
ACON-UT additionally uses the checkpoint to specify how the agent
should plan and act. Both approaches can influence execution, but they
intervene at different levels.

\noindent\textbf{Boundary evidence directs adaptation toward
representational failures.}
The different rules follow naturally from the evidence used by the two
optimizers. ACON observes complete successful and failed trajectories
and must infer globally what the successful execution did better. A
reasonable response to this underdetermined signal is to encode the
successful trajectory's authentication, mutation, verification, and
completion patterns as reusable guidance for future execution. This
response does not require a new checkpoint schema: the same guidance
can be expressed within an execution ledger or within fixed sections
such as \emph{Constraints}, \emph{Key Decisions}, and
\emph{Next Steps}.

\texttt{PAIR} instead compares continuations from the same environment
state before and after a particular context replacement. The optimizer
can inspect which task predicate, identifier, completion status, or
continuation-critical fact changed at that boundary. The resulting
prompt includes rules such as preserving the user's exact qualifying
subset, distinguishing final task results from intermediate aggregates,
and revising stale state only when contradicted by new observations.
These rules address how the compressor represents state rather than how
the downstream agent should solve an entire class of tasks.

Template locking alone does not guarantee this representational focus.
As the controlled ACON-UT variant demonstrates, planning and
verification policies can still be expressed within the fixed
OpenClaw sections. The key constraint in \texttt{PAIR} is instead the
combination of localized evidence and section-level revision:
counterfactual continuations identify what changed at a particular
context replacement, and the optimizer is asked to repair the
corresponding state representation. The fixed interface preserves
deployment compatibility, while boundary localization supplies the
substantive alignment with compression-induced failures.

\noindent\textbf{Connection to AppWorld performance.}
The prompt-level distinction is consistent with the AppWorld results
in Table~\ref{tab:appworld_gpt_luna}. Under history-only compression,
\texttt{PAIR} reaches $84.7\%$ accuracy and $79.8\%$ Pass$^3$,
compared with $81.2\%$ and $70.8\%$ for ACON-UT. It also requires
fewer interaction steps ($16.6$ versus $18.9$). Under
prefix-conditioned compression, \texttt{PAIR} reaches $85.9\%$
accuracy and $80.4\%$ Pass$^3$, compared with $81.3\%$ and $72.6\%$
for ACON-UT, again with fewer steps ($15.2$ versus $17.3$).

The template-controlled results in
Table~\ref{tab:template-controlled-acon} show that these gains are not
explained by the visible schema difference. When ACON-UT is constrained
to the same OpenClaw section structure, \texttt{PAIR} continues to
achieve higher accuracy and Pass$^3$ under both compression scopes.
The controlled prompts also have comparable interaction lengths and
peak context sizes. Template locking therefore removes interface
flexibility as a confound without removing the substantive distinction
between trajectory-level and boundary-level adaptation.

ACON-UT contains multiple rules intended to prevent redundant
execution, yet it requires more steps than \texttt{PAIR} in the
original AppWorld comparison. The incorrect assumption about runtime
persistence provides one concrete mechanism: reconstructing
still-available state introduces work that the environment does not
require. More generally, broad planning and verification rules can
consume interactions when they are not matched to the current task.
\texttt{PAIR} instead preserves the state required by the original
agent without replacing the environment model or prescribing a new
task-solving policy.

\noindent\textbf{Summary.}
The two prompts embody different adaptation strategies. ACON uses
trajectory-level feedback that jointly reflects information retention,
planning, and stochastic execution. Its optimizer may consequently
translate patterns from successful trajectories into general planning,
verification, or stopping rules. The original ACON proposer may also
rewrite the checkpoint schema, but the template-controlled ablation
shows that this structural freedom does not explain its behavior or the
performance difference from \texttt{PAIR}. \texttt{PAIR} instead uses
boundary-level evidence to revise how task, evidence, and planning state
are represented within a fixed OpenClaw interface.

Accordingly, \texttt{PAIR} does not remove planning from compressed
memory; it separates preservation of the current planning state from
selection of the future planning policy. The compressor records the
goal, constraints, decisions, unresolved work, and
continuation-critical facts, while the frozen downstream agent remains
responsible for adapting its plan to the current task. This design
yields higher reliability on AppWorld and stronger answer quality under
cross-task transfer, while avoiding the need to embed a plan inferred
from one set of successful trajectories into every future checkpoint.

%%%%%%%%%%%%%%%%%%%%%%%%%%%%%%%%%%%%%%%%%%%%%%%%%%%%%%%%%%%%

% \newpage
% \input{sections/checklist}

\end{document}